\documentclass[12pt]{article}
\usepackage[T1]{fontenc}
\usepackage[latin9]{inputenc}
\usepackage{amssymb,amsthm,tikz,relsize,enumitem,float,stmaryrd}
\usepackage{amsmath}
\allowdisplaybreaks[4]
\usepackage{longtable,array}
\newcolumntype{M}{>{$}l<{$}} 
\usetikzlibrary{shapes.geometric}
\theoremstyle{theorem}

\newtheorem{proposition}{Proposition}
\newtheorem*{proposition*}{Proposition}
\newtheorem{lemma}{Lemma}
\newtheorem{remark}{Remark}
\newtheorem{definition}{Definition}
\title{The logic of KM belief update is contained\\ in the logic of AGM belief revision}
\author{Giacomo Bonanno\\
{\small University of California, Davis, USA}\\
{\small gfbonanno@ucdavis.edu}
}
\date{}

\begin{document}
\maketitle
\begin{abstract}
For each axiom of KM belief update we provide a corresponding axiom in a modal logic containing three modal operators: a unimodal belief operator $B$, a bimodal conditional operator $>$ and the unimodal necessity operator $\square$. We then compare the resulting logic to the similar logic obtained from converting the AGM axioms of belief revision into modal axioms and show that the latter contains the former. Denoting the latter by $\mathcal L_{AGM}$ and the former by $\mathcal L_{KM}$ we show that every axiom of $\mathcal L_{KM}$ is a theorem of $\mathcal L_{AGM}$. Thus AGM belief revision can be seen as a special case of KM belief update. For the strong version of KM belief update we show that the difference between $\mathcal L_{KM}$ and $\mathcal L_{AGM}$ can be narrowed down to a single axiom, which deals exclusively with unsurprising information, that is, with formulas that were not initially disbelieved.     \\[6pt]
Keywords: belief update, belief revision, conditional, Kripke relation, Lewis selection function.
\end{abstract}
\section{Introduction}
In \cite{Bon25b} every AGM axiom for belief revision (\cite{AGM85}) was translated into a corresponding axiom in a logic containing three modal operators: a unimodal belief operator $B$, a bimodal conditional operator $>$ and the unimodal necessity (or global) operator $\square$. The interpretation of $B\phi$ is "the agent believes $\phi$",  the interpretation of $\phi>\psi$ is "if $\phi$ is (or were) the case then $\psi$ is (or would be) the case" and the interpretation of $\square\phi$ is "$\phi$ is necessarily true". Letting $K$ be the initial belief set, the fact that $\psi\in K\ast\phi$, that is, that $\psi$ belongs to the revised belief set $K\ast\phi$ prompted by the informational input $\phi$, is expressed in \cite{Bon25b} by the formula $B(\phi>\psi)$ (the agent believes that if $\phi$ were the case then $\psi$ would be the case).

For example, in \cite{Bon25b} AGM axiom $(K\ast 2)$ ($\phi\in K\ast\phi$) is translated into the modal axiom $B(\phi>\phi)$ and  AGM axiom $(K\ast 4)$ (if $\neg\phi\notin K$ then $K+\phi\subseteq K\ast\phi$) is translated into the modal axiom $\big(\neg B\neg\phi \wedge B(\phi\rightarrow\psi)\big)\,\rightarrow\, B(\phi>\psi)$. The translation was done by means of the Kripke-Lewis semantics put forward in \cite{Bon25a}: for every AGM axiom a characterizing property of Kripke-Lewis frames was obtained, which was then shown to characterize a corresponding modal-logic axiom.
\par
In this paper we extend the analysis to the notion of belief update introduced by Katsuno and Mendelzon (KM) in \cite{KatMen91} by providing, for every KM axiom, a corresponding modal axiom. We then show that the modal logic -- denoted by $\mathcal L_{AGM}$ -- obtained by adding the modal AGM axioms to the basic logic, \emph{contains} the logic -- denoted by $\mathcal L_{KM}$ -- obtained by adding the modal KM axioms to the basic logic, in the sense that every axiom of $\mathcal L_{KM}$ is a theorem of logic $\mathcal L_{AGM}$. Thus we confirm the conclusion reached in \cite{Pepetal96} -- for the case of the strong version of KM update -- and reinforced in \cite{Bon25a}, that the notion of AGM belief revision can be viewed as a strengthening of the notion of KM belief update. This conclusion becomes even more conspicuous when comparing the logic of AGM belief revision to the logic of the \emph{strong} version of KM belief update: in this case the difference between the two notions reduces to a single axiom which deals with an item of information $\phi$ that is not surprising, in the sense that it is not the case that, initially, the agent believed $\neg\phi$. 
\par
The paper is structured as follows. In Section \ref{SEC:AGM} we review the KM theory of belief update and characterize each KM axiom in terms of a property of the Kripke-Lewis frames introduced in \cite{Bon25a}. In Section 3 we review the modal logic $\mathcal L$ with the three modal operators $B$, $>$ and $\square$ described above and provide a modal axiom corresponding to each of the frame properties given in Section \ref{SEC:AGM}. In Section \ref{SEC:compare} we recall the modal axiomatization of AGM belief revision introduced in \cite{Bon25b} and compare it to the logic of belief update given in Section 3. Section 5 concludes. All the proofs are given in the Appendix.
\section{The KM theory of belief update}
\label{SEC:AGM}
We consider a propositional logic based on a countable set \texttt{At} of atomic formulas. We denote by $\Phi_0$ the set of Boolean formulas constructed from  \texttt{At} as follows: $\texttt{At}\subset \Phi_0$ and if $\phi,\psi\in\Phi_0$ then $\neg\phi$ and $\phi\vee\psi$ belong to $\Phi_0$. Define  $\phi\rightarrow\psi$, $\phi\wedge\psi$, and $\phi\leftrightarrow\psi$ in terms of $\neg$ and $\vee$ in the usual way (e.g. $\phi\rightarrow\psi$ is a shorthand for $\neg\phi\vee\psi$); furthermore, $\top$ denotes a tautology and $\bot$ a contradiction.
\par
Given a subset $K$ of $\Phi_0 $, its deductive closure $Cn(K)\subseteq\Phi_0$ is defined as follows: $\psi \in Cn(K)$ if and only if there exist $\phi _1,...,\phi _n\in K$ \ (with $n\geq 0$) such that $(\phi _1\wedge ...\wedge \phi _n)\rightarrow \psi $ is a tautology. A set $K\subseteq \Phi_0 $ is \textit{consistent} if $ Cn(K)\neq \Phi_0 $; it is \textit{deductively closed} if $K=Cn(K)$. Given a set $K\subseteq \Phi_0$ and a formula $\phi\in\Phi_0$, the \emph{expansion} of $K$ by $\phi$, denoted by $K+\phi$, is defined as follows: $K+\phi=Cn\left(K\cup\{\phi\}\right)$.
\par
Let $K\subseteq\Phi_0$ be a \emph{consistent and deductively closed} set representing the agent's initial beliefs and let $\Psi \subseteq \Phi_0 $ be a set of formulas representing possible informational inputs. A \emph{belief change function} based on $\Psi$ and $K$ is a function $\circ:\Psi \rightarrow 2^{\Phi_0 }$ ($2^{\Phi_0}$ denotes the set of subsets of $\Phi_0 $) that associates with every formula $\phi \in \Psi $ a set $ K\circ\phi \subseteq \Phi_0 $, interpreted as the change in $K$ prompted by the input $\phi$. We follow the common practice of writing $K\circ\phi$ instead of $\circ(\phi)$ which has the advantage of making it clear that the belief change function refers to a given, \emph{fixed}, $K$. If $\Psi \neq \Phi_0 $ then $\circ$ is called a \emph{partial} belief change function, while if $\Psi =\Phi_0 $ then $\circ$ is called a \emph{full-domain} belief change function. 
 
\subsection{The KM theory of belief update}
We consider the notion of belief update introduced by Katsuno and Mendelzon (KM) in \cite{KatMen91} and, in Section \ref{SEC:compare}, we compare it to the notion of belief revision introduced by Alchourr\'{o}n, G\"{a}rdenfors and Makinson (AGM) in \cite{AGM85}. The formalism in the two theories is somewhat different. In \cite{KatMen91} a belief state is represented by a sentence in a finite propositional calculus and belief update is modeled as a function over formulas, while in \cite{AGM85} a belief state is represented (as we did above) as a set of formulas. \par
Note that, while \cite{KatMen91} allows for the possibility of inconsistent initial beliefs, following \cite{Pepetal96} we take as starting point a consistent belief set. 
\par 
We follow closely the axiomatization of belief update proposed by \cite{Ara25,Pep93,Pepetal96}, which has the advantage of making  update directly comparable to revision (note, however, that \cite{Ara25,Pep93,Pepetal96} only cover the case of "strong" update, where axioms $(K\diamond6)$ and $(K\diamond7)$ are replaced by $(K\diamond9)$: see Section \ref{SEC:compare} below).
\par

Consider the following version of the KM axioms of belief update based on the  \emph{consistent} and deductively closed set $K$ (representing the initial beliefs): $\forall \phi,\psi\in\Phi_0$,
\begin{enumerate}
\item[]$(K\diamond 0)$\quad $K\diamond \phi=Cn(K\diamond \phi)$.
\item[]$(K\diamond 1)$\quad $\phi\in K\diamond \phi$.
\item[]$(K\diamond 2)$\quad If $\phi\in K$ then $K\diamond \phi=K$.
\item[]$(K\diamond 3a)$\quad If $\phi$ is a contradiction then $K\diamond \phi=\Phi_0$.
\item[]$(K\diamond 3b)$\quad If $\phi$ is not a contradiction then $K\diamond \phi\ne\Phi_0$.
\item[]$(K\diamond 4)$\quad If $\phi\leftrightarrow\psi$ is a tautology then $K\diamond \phi=K\diamond \psi$.
\item[]$(K\diamond 5)$\quad $K\diamond (\phi\wedge\psi)\subseteq(K\diamond \phi)+\psi$.
\item[]$(K\diamond 6$)\quad If $\psi\in K\diamond\phi$ and $\phi\in K\diamond\psi$ then $K\diamond\phi=K\diamond\psi$ .
\item[]$(K\diamond 7$)\quad If $K$ is complete
\footnote{\label{FT:MCS }A belief set $K$ is \emph{complete} if, for every formula $\phi\in\Phi_0$, either $\phi\in K$ or $\lnot\phi\in K$.}
\,then $(K\diamond\phi)\cap(K\diamond\psi)\subseteq K\diamond(\phi\vee\psi)$.
\end{enumerate}

\begin{remark}
\label{REM:aboutKM8}
Katsuno and Mendelzon provide an additional axiom (they name it U8), which they call the "disjunction rule". \cite{Ara25,Pep93,Pepetal96} translate it into the following "axiom", which makes use of maximally consistent sets of formulas (MCS), also called possible worlds.\footnote{\label{FT:MCS }
A set of formulas $\Delta\subset \Phi_0$ is maximally consistent if it is consistent and, furthermore, $\forall \phi\in\Phi_0\setminus\Delta$, $\Delta\cup\{\phi\}$ is inconsistent. Every MCS is deductively closed and complete.\\
 If $K$ is consistent, then $K$ is complete if and only if $\llbracket K\rrbracket$ is a singleton, where $\llbracket K\rrbracket$ denotes the set of maximally consistent sets of formulas that contain all the formulas in $K$. 
}\,
 Given a set of formulas $\Gamma\subseteq\Phi_0$, let $\llbracket\Gamma\rrbracket$ be the set of MCS that contain all the formulas in $\Gamma$.\footnote{
 Following the common notation in the literature, we denote by $W$ the set of MCS, or possible worlds, and by $w$ an element of $W$ (hence $w\in W$ is a set of formulas). Thus, $w\in\llbracket\Gamma\rrbracket$ if and only if $w\in W$ and $\Gamma\subseteq w$.}
 \,  The additional axiom is the following (note that $\llbracket K\rrbracket\ne\varnothing$ if and only if $K$ is consistent):
\begin{equation*}
\label{KM8}
  (K\diamond 8)\qquad\text{If } \llbracket K\rrbracket\ne\varnothing  \text{ then } K\diamond\phi=\bigcap\limits_{w\in \llbracket K\rrbracket} (w\diamond\phi).
\end{equation*}
\end{remark}
\noindent
($K\diamond 8$) is of a different nature than the other axioms, since it applies the update operator not only to the initial belief set $K$ but also to the individual MCS contained in $ \llbracket K\rrbracket$. ($K\diamond 8$) can be viewed more as a condition on the interpretation or semantics rather than a real axiom; it is superfluous in our framework since its role is directly captured by the semantics described below; for a more detailed discussion see \cite{Bon25a}.
\par\noindent
$(K\diamond0)$ does not appear in the list of axioms provided by Katsuno and Mendelzon, since their formalism is not in terms of belief sets. For $i\in\{1,2,4,5,6,7\}$, axiom $(K\diamond i)$ is a translation of Katsuno and Mendelzon's axiom $(Ui)$ (for details see \cite{Pep93}). The conjunction of $(K\diamond 3a)$ and $(K\diamond 3b)$ is the translation of Katsuno and Mendelzon's axiom $(U3)$ when attention is restricted to the case where the initial belief set $K$ is consistent.\footnote{ 
Katsuno and Mendelzon allow for the possibility that the initial beliefs are inconsistent, in which case the conjunction of ($K\diamond 3a$) and ($K\diamond 3b$) would be stated as follows: $K\diamond \phi=\Phi_0$ if and only if either $K$ is inconsistent or $\phi$ is a contradiction. It should be noted that one important difference between update and revision is precisely that updating an inconsistent $K$ by a consistent formula $\phi$ yields the inconsistent belief set $\Phi_0$, while revising an inconsistent $K$ by a consistent formula $\phi$ yields a consistent set (AGM axiom ($K\ast5b$): see Section \ref{SEC:compare}).
} 

\par
The following lemma, proved in the Appendix, shows that axiom $(K\diamond 7)$ can be replaced by the following, seemingly stronger, axiom (obtained from  $(K\diamond 7)$ by dropping the clause `if $K$ is complete'):
\begin{equation*}
(K\diamond 7s)\qquad (K\circ\phi)\cap(K\circ\psi)\subseteq K\circ(\phi\vee\psi).
\end{equation*}
\begin{lemma}
\label{LEM:K7s}
$(K\diamond 7s)$ follows from $(K\diamond 7)$ and $(K\diamond 8)$
\end{lemma}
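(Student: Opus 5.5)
The plan is to reduce $(K\diamond 7s)$ for an arbitrary consistent $K$ to the complete case, where $(K\diamond 7)$ applies, by means of $(K\diamond 8)$. Since $K$ is consistent, $\llbracket K\rrbracket\ne\varnothing$, so $(K\diamond 8)$ gives, for every formula $\chi$,
\begin{equation*}
K\diamond\chi=\bigcap_{w\in\llbracket K\rrbracket}(w\diamond\chi).
\end{equation*}
The idea is to apply $(K\diamond 7)$ to each world $w\in\llbracket K\rrbracket$, viewed as a belief set in its own right. Such a $w$ is an MCS, hence consistent, deductively closed and complete, so it is a legitimate starting point for $\diamond$ and satisfies the antecedent of $(K\diamond 7)$.

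Take $\chi\in(K\diamond\phi)\cap(K\diamond\psi)$. By $(K\diamond 8)$ applied to $\phi$, $\chi\in w\diamond\phi$ for every $w\in\llbracket K\rrbracket$. Likewise, by $(K\diamond 8)$ applied to $\psi$, $\chi\in w\diamond\psi$ for every such $w$. Fix $w\in\llbracket K\rrbracket$. Then $\chi\in(w\diamond\phi)\cap(w\diamond\psi)$. Since $w$ is complete, $(K\diamond 7)$ with $w$ in place of $K$ yields $\chi\in w\diamond(\phi\vee\psi)$. As $w$ was arbitrary, $\chi\in\bigcap_{w\in\llbracket K\rrbracket}w\diamond(\phi\vee\psi)$. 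By $(K\diamond 8)$ applied to $\phi\vee\psi$, this intersection is exactly $K\diamond(\phi\vee\psi)$. This establishes $(K\diamond 7s)$.

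The only delicate step is the instantiation of the axioms at the worlds $w$. One must read $(K\diamond 7)$ and $(K\diamond 8)$ as holding for every consistent, deductively closed initial belief set, in particular for each MCS $w$, with a single update operator acting on all of them. This is the reading implicit in the statement of $(K\diamond 8)$, which already applies $\diamond$ to the individual $w$. I would state this reading explicitly at the outset. No case split on whether $K$ itself is complete is needed: if $K$ is complete, then $\llbracket K\rrbracket$ is a singleton $\{w\}$ with $w=K$, and the argument above reduces to $(K\diamond 7)$ itself.
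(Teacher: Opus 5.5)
Your proof is correct and follows the paper's route: you use $(K\diamond 8)$ to write $K\diamond\phi$, $K\diamond\psi$ and $K\diamond(\phi\vee\psi)$ as intersections over $\llbracket K\rrbracket$, apply $(K\diamond 7)$ at each complete MCS $w$, and then intersect. The only differences are cosmetic. You argue elementwise with a fixed $\chi$, whereas the paper manipulates the intersections of sets directly, and your explicit remark that $(K\diamond 7)$ must be instantiated at each $w$ with the same operator $\diamond$ spells out a reading the paper uses tacitly.
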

In order to facilitate the conversion to a modal formula, we replace $(K\diamond 6)$ with the following weaker form, which -- in the presence of $(K\diamond0)$ -- is equivalent to $(K\diamond 6)$ (the role of the added clause `$\top\in K\diamond(\phi\wedge\psi)$' is explained in Remark \ref{REM:topinKdiamond}):
\[(K\diamond 6w)\quad \text{If } \psi\in K\diamond\phi \text{ and } \phi\in K\diamond\psi \text{ and }\top\in K\diamond(\phi\wedge\psi)\text{ then }K\diamond\phi=K\diamond\psi\] 
\par
In view of Lemma \ref{LEM:K7s} and the above observation that $(K\diamond 8)$ is superfluous in our framework, we define a KM belief update as follows.
\begin{definition}
\label{DEF:update}
A \emph{KM belief update function},  based on the  consistent and deductively closed set $K$, is a full domain belief change function \, $\diamond:\Phi_0\to 2^{\Phi_0}$ that satisfies axioms $(K\diamond 0)$-$(K\diamond 5)$, $(K\diamond 6w)$ and $(K\diamond 7s)$.
\end{definition}
Katsuno and Mendelzon  show that, semantically, their notion of belief update corresponds to partial pre-orders on the set of maximally consistent sets of formulas. We will, instead, make use of the Kripke-Lewis semantics introduced in \cite{Bon25a}, to which we now turn.
\subsubsection{Kripke-Lewis semantics}
\begin{definition}
\label{DEF:frame}
A \emph{Kripke-Lewis frame} is a triple $\left\langle {S,\mathcal B,f} \right\rangle$ where
\begin{enumerate}
\item $S$ is a set of \emph{states}; subsets of $S$ are called  \emph{events}.
\item $\mathcal B \subseteq S \times S$ is a binary relation on $S$ which is serial: $\forall s\in S, \exists s'\in S$, such that $s\mathcal B s'$ ($s\mathcal B s'$ is an alternative notation for $(s,s')\in\mathcal B$). We denote by $\mathcal B(s)$ the set of states that are reachable from $s$ by $\mathcal B$: $\mathcal B(s)=\{s'\in S: s\mathcal B s'\}$. $\mathcal B(s)$ is interpreted as the set of states that, initially, the agent considers doxastically possible at state $s$.
\item $f:S\times (2^S\setminus\varnothing) \rightarrow 2^S$ is a \emph{selection function} that associates with every state-event pair $(s,E)$ (with $E\ne\varnothing$) a set of states $f(s,E)\subseteq S$,  interpreted as the set of states that are closest (or most similar) to $s$, conditional on event $E$.
    \end{enumerate}
\end{definition}
\noindent
We require seriality of the belief relation because it ensures that the initial beliefs at state $s$, represented by the non-empty set $\mathcal{B}(s)$, are consistent. Similarly, the requirement that $f(s,E)$ is defined only if $E\ne\varnothing$ ensures that the informational input is consistent (how to deal with inconsistent information is discussed in Section 3). Note the absence, in Definition \ref{DEF:frame}, of the extra properties imposed on the selection function in \cite{Bon25a}, namely, 
\begin{enumerate}
\item[] $ f(s,E)\subseteq E$ (Identity).
    \item[] $ f(s,E)\ne\varnothing$ (Normality).
    \item[] if $s\in E$ then $s\in f(s,E)$ (Weak Centering).
    \end{enumerate}
The reason why we do not impose any properties on the selection function is that we want to highlight the role of each property in the characterization of the KM axioms. For example, Identity (localised to $\mathcal{B}(s)$) plays a role in the characterization of $(K\diamond2)$ but not in the characterization of the other axioms, Normality (localised to $\mathcal{B}(s)$) is used to characterize axiom $(K\diamond 3b)$ but not the other axioms.\footnote{In conditional logic, Identity corresponds to the axiom "if $\phi$ then $\phi$", denoted by $\phi >\phi$, and Normality validates the axiom $(\phi >\psi)\rightarrow \neg(\phi >\neg\psi)$ for consistent $\phi$: see \cite{Lew71}.}
\par
Adding a valuation to a frame yields a model. Thus a \emph{model} is a tuple $\left\langle {S,\mathcal B,f,V} \right\rangle$ where $\left\langle {S,\mathcal B,f} \right\rangle$ is a frame and $V:\texttt{At}\rightarrow 2^S$ is a valuation that assigns to every atomic formula $p\in\texttt{At}$ the set of states where $p$ is true.
\begin{definition}
\label{Truth0}
Given a model $M=\left\langle {S,\mathcal B,f,V} \right\rangle$ define truth of a Boolean formula $\phi\in\Phi_0$ at a state $s\in S$ in model $M$, denoted by $s\models_M\phi$, as follows:
\begin{enumerate}
\item if $p\in\texttt{At}$ then $s\models_M p$ if and only if $s\in V(p)$,
\item $s\models_M\neg\phi$ if and only if $s\not\models_M\phi$,
\item $s\models_M(\phi\vee\psi)$ if and only if $s\models_M\phi$ or $s\models_M\psi$ (or both).
\end{enumerate}
\end{definition}
\noindent
We denote by $\Vert\phi\Vert_M$ the truth set of formula $\phi$ in model $M$: $\Vert\phi\Vert_M=\{s\in S: s\models_M\phi\}.$
\par
Given a model $M=\left\langle {S,\mathcal B,f,V} \right\rangle$ and a state $s\in S$,  let $K_{s,M}=\{\phi \in \Phi _{0}:\mathcal B(s) \subseteq \Vert \phi \Vert_{M} \}$; thus a formula $\phi$ belongs to $K_{s,M}$ if and only if at state $s$ the agent believes $\phi$, in the sense that $\phi$ is true at every state that the agent considers doxastically possible at $s$. We identify $K_{s,M}$ with the agent's \emph{initial beliefs at state} $s$. It is shown in \cite{Bon25a} that the set $K_{s,M}\subseteq\Phi_0$ so defined is deductively closed and consistent (the latter property follows from seriality of the belief relation $\mathcal B$).
\par
 Next, given a model $M=\left\langle {S,\mathcal B,f,V} \right\rangle$ and a state $s\in S$, let $\Psi_M=\{\phi\in\Phi_0:\Vert\phi\Vert_M\ne\varnothing\}$\footnote{
Since, in any given model, there are formulas $\phi$ such that $\Vert\phi\Vert_M=\varnothing$ (at the very least all the contradictions), $\Psi_M$ is a proper subset of $\Phi_0$.
  }\ 
and define the following \emph{partial} belief change function
 $\circ:\Psi_M\to 2^{\Phi_0}$ based on $\Psi_M$ and $K_{s,M}$:
\begin{equation}
\tag{RI}
\label{RI}
\begin{array}{*{20}{ll}}
 \psi\in K_{s,M}\circ \phi&\text{if and only if, }\,\, \forall s'\in\mathcal B(s), \, f\left(s',\Vert\phi\Vert_M\right)\subseteq\Vert\psi\Vert_M\\[10pt]
 {}&\text{or, equivalently, }\bigcup\limits_{s'\in\mathcal B(s)}{f\left(s',\Vert\phi\Vert_M\right)}\,\subseteq\,\Vert\psi\Vert_M
\end{array}
\end{equation}
Given the customary interpretation of selection functions in terms of conditionals, (RI) can be interpreted as stating that $\psi\in K_{s,M}\circ\phi$ if and only if at state $s$ the agent believes that "if $\phi$ is (were) the case then $\psi$ is (would be) the case".\footnote{Note that we allow for both the indicative and the subjunctive conditional. The indicative form (if $\phi$ \emph{is} the case then $\psi$ \emph{is} the case) seems to be more appropriate when the initial belief set does not contain $\neg\phi$ (that is, if the agent initially considers $\phi$ possible), while the subjunctive form (if $\phi$ \emph{were} the case then $\psi$ \emph{would be} the case) seems  to be more appropriate when the agent initially believes $\neg\phi$.}\, This interpretation will be made explicit in the modal logic considered in Section 3.
\begin{remark}
\label{REM:topinKdiamond}
Consider an arbitrary model $M$ and state $s$ and let $\circ:\Psi_M\to 2^{\Phi_0}$ be the partial belief change function defined by \eqref{RI}. Then, for every consistent formula $\chi\in\Phi_0$,
\[\Vert\chi\Vert_M\ne\varnothing\text{ if and only if } \top\in K_{s,M}\diamond\chi.\footnote{
Proof. If $\top\in K_s\diamond\chi$  then $\chi\in\Psi_M$ and thus, by definition of $\Psi_M$, $\Vert\chi\Vert_M\ne\varnothing$. Conversely, if $\Vert\chi\Vert_M\ne\varnothing$ then  $\chi\in\Psi_M$ and, for every $s'\in\mathcal B(s)$, $f(s',\Vert\chi\Vert_M)$ is well-defined and, trivially, $\bigcup\limits_{s'\in\mathcal B(s)}{f(s',\Vert\chi\Vert_M)}\subseteq S=\Vert\top\Vert$, so that $\top\in K_s\diamond\chi$.
}\] 
\end{remark}
In what follows, when stating an axiom for a belief change function, we implicitly assume that it applies to every formula \emph{in its domain}. For example, the axiom $\phi\in K\circ\phi$ asserts that, for all $\phi$ in the domain of $\circ$, \,$\phi\in K\circ\phi$.
\begin{definition}
\label{DEF:validity}
An axiom for belief change functions is \emph{valid on a frame $F$} if, for every model based on that frame and for every state $s$ in that model, the partial belief change function defined by \eqref{RI} satisfies the axiom. An axiom is \emph{valid on a set of frames $\mathcal F$} if it is valid on every frame $F\in\mathcal F$.
\end{definition}
A stronger notion than validity is that of frame correspondence. The following definition mimics the notion of frame correspondence in modal logic.
\begin{definition}
\label{DEF:correspondence}
We say that an axiom $A$ of belief change functions \emph{is characterized by}, or \emph{corresponds to}, or \emph{characterizes}, a property $P$ of frames if the following is true:
\begin{enumerate}[label=(\arabic*)]
\item axiom $A$ is valid on the class of frames that satisfy property $P$, and
\item if a frame does not satisfy property $P$ then axiom $A$ is not valid on that frame, that is, there is a model based on that frame and a state in that model where the partial belief change function defined by \eqref{RI} violates axiom $A$.
\end{enumerate}
\end{definition}
\pagebreak
\begin{figure}[H]
\begin{center}
\small{
$\renewcommand{\arraystretch}{2}\arraycolsep=3pt\begin{array}{*{20}{l}}
\qquad\text{\large KM axiom }&{}&\qquad\text{\large Frame property}\\
\hline
(K\diamond 0)\quad K\diamond\phi=Cn(K\diamond\phi)&\vline&\text{No additional property}\\
\hline
(K\diamond 1)\quad\phi\in K\diamond\phi &\vline&(P^{*2}_{\diamond 1})\quad\begin{array}{l}
\forall s\in S, \forall E\in 2^S\setminus\varnothing, \\[-10pt]
 \bigcup\limits_{s'\in\mathcal B(s)} {f(s',E)}\subseteq E\end{array}\\
\hline
(K\diamond 2)\quad \text{If }\phi\in K\text{ then }K\diamond\phi=K &\vline&(P\diamond 2) \quad\begin{array}{l}
\forall s\in S,\forall E\in 2^S\setminus\varnothing,\\[-10pt]\text{ if } \mathcal B(s)\subseteq E\text{ then, }\\[-10pt]\bigcup\limits_{s'\in\mathcal B(s)} {f(s',E)}\,=\,\mathcal B(s)\end{array}\\
\hline
(K\diamond 3b)\quad\begin{array}{l}\text{If }\neg\phi\text{ is not a tautology}\\[-10pt]
\text{then }K\diamond\phi\ne\Phi_0
\end{array}&\vline&\begin{array}{ll}(P^{*5b}_{\diamond 3b})& \forall s\in S,\forall E\in 2^S\setminus\varnothing, \\[-10pt]
{}&\exists s'\in\mathcal B(s)\text{ such that } f(s',E)\ne\varnothing\end{array}\\
\hline
(K\diamond 4)\quad\begin{array}{l}\text{if }\phi\leftrightarrow\psi \text{ is a tautology}\\[-10pt]
\text{then }K\diamond\phi=K\diamond\psi\end{array}&\vline&\text{No additional property}\\
\hline
(K\diamond 5)\quad K\diamond(\phi\wedge\psi)\subseteq (K\diamond\phi)+\psi&\vline&\begin{array}{l}
(P^{*7}_{\diamond 5})\quad \forall s\in S,\forall E,F\in 2^S\text{ with }E\cap F\ne\varnothing,\\[-10pt]\bigcup\limits_{s'\in\mathcal B(s)} {\left(f(s',E)\cap F\right)}\,\,\subseteq\,\,\bigcup\limits_{s'\in\mathcal B(s)} {f(s',E\cap F)} 
\medskip\end{array}\\
\hline
(K\diamond 6w)\quad\begin{array}{l}\text{If }\psi\in K\diamond\phi \text{ and } \phi\in K\diamond\psi\\[-8pt]
\text{ and }\top\in K\diamond(\phi\wedge\psi)\\[-8pt] \text{ then }K\diamond\phi=K\diamond\psi\end{array}&\vline&
\begin{array}{l}(P\diamond 6w)\quad
\forall s\in S,\forall E,F\in 2^S\text{ with }E\cap F\ne\varnothing\\[-6pt]
 \text{if } \bigcup\limits_{s'\in\mathcal{B}(s)} {f(s',E)}\subseteq F\text{ and}
 \bigcup\limits_{s'\in\mathcal B(s)} {f(s',F)}\subseteq E\\[-4pt]
 \text{then }\bigcup\limits_{s'\in\mathcal{B}(s)} {f(s',E)}\,\,=\,\bigcup\limits_{s'\in\mathcal{B}(s)} {f(s',F)}\medskip
 \end{array}
 \\
\hline
(K\diamond 7s)\quad(K\diamond\phi)\cap(K\diamond\psi)\subseteq K\diamond(\phi\vee\psi)&\vline&\begin{array}{l}(P\diamond 7s)\qquad
\forall s\in S,\forall E,F\in 2^S\setminus\varnothing\\[-6pt]
\bigcup\limits_{s'\in\mathcal B(s)} {f(s',E\cup F)}\\[8pt]
 \subseteq \left(\bigcup\limits_{s'\in\mathcal B(s)} {f(s',E)}\right)\,\,\cup \,\left(\bigcup\limits_{s'\in\mathcal B(s)} {f(s',F)}\right)
 \end{array}
\end{array}$
}
\end{center}
\caption{Semantic characterization of the KM axioms.}
\label{Fig_1}
\end{figure}

The table in Figure \ref{Fig_1} lists, for every KM axiom, the characterizing property of frames.\footnote{
Some of the properties in Figure \ref{Fig_1} are related to properties of the selection function discussed in the literature on conditionals (\cite{Lew71}). For example, $(P^{\ast2}_{\diamond1})$ is related to Identity  and $(P^{\ast5b}_{\diamond3b})$ to Normality. The remaining properties do not seem to be related to properties discussed in that literature.
}\ 
The proofs are given in the Appendix. 

The reason for the absence of axiom $(K\diamond 3a)$ from the table in Figure \ref{Fig_1} will become clear in the next section. When a KM axiom coincides with an AGM axiom, the name of the corresponding semantic property reflects this; for example, since KM axiom $(K\diamond1)$ coincides with AGM axiom $(K\ast2)$, the corresponding property is denoted by $(P^{*2}_{\diamond 1})$.\footnote{The AGM axioms are listed in Section \ref{SEC:compare}.} 

\section{A modal logic for belief update}
We now turn to the modal language considered in \cite{Bon25b}, which contains three modal operators: a unimodal belief operator $B$, a bimodal conditional operator $>$ and the unimodal necessity operator $\square$. The interpretation of $B\phi$ is "the agent believes $\phi$",  the interpretation of $\phi>\psi$ is "if $\phi$ is (or were) the case then $\psi$ is (or would be) the case" and the interpretation of $\square\phi$ is "$\phi$ is necessarily true".
\par\noindent
The set $\Phi$ of formulas in the language is defined as follows: 
\begin{itemize}
\item $\Phi_0\subseteq \Phi$ (recall that $\Phi_0$ is the set of Boolean formulas built on the countable set of atomic sentences \texttt{At}),
\item if $\alpha,\beta\in\Phi$ then all of the following belong to $\Phi$: $\square\alpha$, $B\alpha$, $\alpha>\beta$ and all their Boolean combinations.
    \end{itemize}
We focus on the basic normal logic, denoted by $\mathcal L$, consisting of the following axioms and rules of inference.\footnote{We follow the nomenclature in \cite[p.115]{Che84}.} We denote general formulas by $\alpha$, $\beta$ and $\gamma$, while $\phi$, $\psi$ and $\chi$ are reserved for \emph{Boolean} formulas (e.g. in Figures 2-4). 
\begin{itemize}
\item Every formula that has the form of a classical tautology is a theorem.
\item The consistency axiom $D$ for $B$: 
\[\qquad(D_B)\qquad B\alpha\rightarrow\neg B\neg\alpha.\]
\item The conjunction axiom $C$ for $\square$, $B$ and $>$:
\[\arraycolsep=10pt\renewcommand{\arraystretch}{2}\begin{array}{ll}
(C_\square)&\square\alpha\wedge\square\beta\,\rightarrow\,\square(\alpha\wedge\beta)\\
(C_B)&B\alpha\wedge B\beta\,\rightarrow\, B(\alpha\wedge\beta)\\
(C_>)&(\gamma>\alpha)\wedge(\gamma>\beta)\,\rightarrow\,(\gamma>(\alpha\wedge\beta))\\
\end{array}\]
\item The necessity-to-belief axiom:\quad$(NB)\quad \square\alpha\rightarrow B\alpha$
\item The rule of inference\emph{ Modus Ponens}:\quad$(MP)\quad\dfrac{\alpha\,,\,\alpha\rightarrow\beta}{\beta}$ 
\item The rule of inference \emph{Necessitation} for  $\square$ and $>$:\\[3pt]
\[(N_\square)\quad\dfrac{\alpha}{\square\alpha} \qquad\qquad\qquad (N_>)\quad\dfrac{\beta}{\alpha>\beta}\]
\item The rule of inference $RM$ for  $\square$, $B$ and $>$:
\[\begin{array}{llll}
(RM_\square)&\dfrac{\alpha\rightarrow\beta}{\square\alpha\rightarrow\square\beta}&
(RM_B)&\dfrac{\alpha\rightarrow\beta}{B\alpha\rightarrow B\beta}\\[16pt]
(RM_>)&\dfrac{\alpha\rightarrow\beta}{(\gamma>\alpha)\rightarrow(\gamma>\beta)}&&
\end{array}\]
\end{itemize}
\begin{remark}
\label{REM:derivedAxRul}
(A) The following (which will be used in the proofs) are theorems of logic $\mathcal{L}$:
\begin{equation*}
\arraycolsep=10pt\renewcommand{\arraystretch}{2.8}
\begin{array}{ll}
(C_{\neg\square\neg})&\neg\square\neg(\alpha\wedge\beta)\,\rightarrow\,\neg\square\neg\alpha\\
(C_B^{inv})&B(\alpha\wedge\beta)\,\rightarrow\,B\alpha\wedge B\beta \\
(K_>)&(\alpha>\beta)\wedge (\alpha>(\beta\rightarrow\gamma))\,\rightarrow\, (\alpha>\gamma)\\
(A^{*1}_{\diamond 0})&B(\alpha>\beta)\wedge B(\alpha>(\beta\rightarrow\gamma))\,\rightarrow\, B(\alpha>\gamma)
\end{array}
\end{equation*}
(B) The following are derived rules of inference in logic $\mathcal L$:
\begin{equation*}
\arraycolsep=12pt
\begin{array}{ll}
(RM_{\neg\square\neg})& \dfrac{\alpha\rightarrow\beta}{\neg\square\neg\alpha\rightarrow\neg\square\neg\beta}\\[18pt]
(N_B)&\dfrac{\alpha}{B\alpha}\\[18pt]
(RM_{B>})&\dfrac{\alpha\rightarrow\beta}{B(\gamma>\alpha)\rightarrow B(\gamma>\beta)}
\end{array}
\end{equation*}
\end{remark}
\subsection{Frame correspondence}
As semantics for this modal logic we take the Kripke-Lewis frames of Definition \ref{DEF:frame}. A model based on a frame is obtained, as before, by adding a valuation $V:\texttt{At}\to 2^S$. The following definition expands Definition \ref{Truth0} by adding validation rules for formulas of the form $\square\alpha$, $\alpha>\beta$ and $B\alpha$.
\begin{definition}
\label{TruthModal}
Truth of a formula $\alpha\in\Phi$ at state $s$ in model $M$ (denoted by $s\models_M\alpha$) is defined as follows:
\begin{enumerate}
\item if $p\in\texttt{At}$ then $s\models_M p$ if and only if $s\in V(p)$.
\item $s\models_M\neg\alpha$ if and only if $s\not\models_M\alpha$.
\item $s\models_M(\alpha\vee\beta)$ if and only if $s\models_M\alpha$ or $s\models_M\beta$ (or both).
\item $s\models_M\square\alpha$ if and only if, $\forall s'\in S$, $s'\models_M\alpha$ (thus $s\models_M\neg\square\neg\alpha$ if and only if, for some $s'\in S$, $s'\models_M\alpha$, that is, $\Vert\alpha\Vert_M\ne\varnothing$).
\item $s\models_M(\alpha>\beta)$ if and only if, either
\begin{enumerate}[label=(\alph*)]
\item $s\models_M\square\neg\alpha$ (that is, $\Vert\alpha\Vert_M=\varnothing$), or
\item $s\models_M\neg\square\neg\alpha$ (that is, $\Vert\alpha\Vert_M\ne\varnothing$) and, for every $s'\in f(s,\Vert\alpha\Vert_M)$, $s'\models_M\beta$ (that is, $ f(s,\Vert\alpha\Vert_M)\subseteq\Vert\beta\Vert_M$).\footnote{Recall that, by definition of frame, $f(s,E)$ is defined only if $E\ne\varnothing$.}
\end{enumerate}
\item $s\models_M B\alpha$ if and only if, $\forall s'\in\mathcal B(s)$, $s'\models_M\alpha$ (that is, $\mathcal B(s)\subseteq \Vert\alpha\Vert_M$).
\end{enumerate}
\end{definition}
The definitions of validity and characterization are as in the previous section.
\begin{definition}
\label{DEF:validity2}
A formula $\alpha\in\Phi$ is \emph{valid on a frame $F$} if, for every model $M$ based on that frame and for every state $s$ in that model, $s\models_M\alpha$. A formula $\alpha\in\Phi$ is \emph{valid on a set of frames $\mathcal F$} if it is valid on every frame $F\in\mathcal F$.
\end{definition}
\begin{definition}
\label{DEF:MODALcorrespondence}
A formula $\alpha\in\Phi$ \emph{ is characterized by}, or \emph{corresponds to}, or \emph{characterizes}, a property $P$ of frames if the following is true:
\begin{enumerate}[label=(\arabic*)]
\item $\alpha$ is valid on the class of frames that satisfy property $P$, and
\item if a frame does not satisfy property $P$ then $\alpha$ is not valid on that frame.
    \end{enumerate}
\end{definition}

\par
The table in Figure \ref{Fig_2} shows, for every property of frames considered in Figure \ref{Fig_1}, the modal formula that corresponds to it. When a KM axiom coincides with an AGM axiom, the name of the corresponding modal formula reflects this; for example, since KM axiom $(K\diamond1)$ coincides with AGM axiom $(K\ast2)$, the corresponding modal formula is denoted by $(A^{*2}_{\diamond 1})$.\footnote{The AGM axioms are listed in Section \ref{SEC:compare}.} 
\bigskip\par
The proofs of the characterizations results are given in the Appendix.\bigskip
\par\noindent 
In axioms $(A^{*5b}_{\diamond3b})$ and $(A\diamond7s)$ the clause $\neg\square\neg\phi$ in the antecedent ensures that, on the semantic side, $\Vert\phi\Vert\ne\varnothing$; in particular, it rules out that $\phi$ is a contradiction; similarly for the clauses $\neg\square\neg\psi$ and $\neg\square\neg(\phi\wedge\psi)$. \,
The interpretation of the axioms is as follows:\\[6pt]
$\arraycolsep=10pt
\begin{array}{ll}
(A^{*2}_{\diamond1})&\text{the agent believes that if }\phi\text{ were the case, then }\phi\text{ would be}\\
{}&\text{the case}\\[5pt]
(A\diamond2)&\text{if the agent initially believes that }\phi \text{ then she believes that }\psi\\
{}&\text{if and only if she believes that if }\phi\text{ were the case, then }\psi\\
{}&\text{ would be the case}\\[5pt]
(A^{*5b}_{\diamond3b})&\text{if }\phi \text{ is not necessarily false and the agent believes that if }\phi \text{ were}\\
{}&\text{the case, then }\psi\text{ would be the case, then the agent does not}\\
{}&\text{believe that if }\phi\text{ were the case, then }\psi\text{ would not be the case}\\[5pt]
(A^{*7}_{\diamond5})&\text{if the conjunction }\phi\wedge\psi \text{ is not necessarily false and the agent}\\
{}&\text{believes that if }\phi\wedge\psi\text{ were the case, then }\chi\text{ would be the case}\\
{}&\text{then she believes that if }\phi\text{ were the case, then it would be the}\\
{}&\text{case that either }\psi \text{ is false or }\chi\text{ is true (that is, that }\psi\rightarrow\chi)\\[5pt]
(A\diamond6)&\text{if }\phi\wedge\psi \text{ is not necessarily false and the agent believes that}\\
{}&\text{if }\phi\text{ were the case, then }\psi\text{ would be the case and that if }\psi\text{ were}\\
{}&\text{the case, then }\phi\text{ would be the case, then she believes that if}\\
{}&\phi\text{ were the case, then }\chi\text{ would be the case if and only if she}\\
{}&\text{believes that if }\psi\text{ were the case, then }\chi\text{ would be the case}\\[5pt]
(A\diamond7s)&\text{if neither }\phi\text{ nor }\psi \text{ is necessarily false, and the agent believes}\\
{}&\text{that if }\phi\text{ were the case, then }\chi\text{ would be the case and that}\\
{}&\text{if }\psi\text{ were the case, then }\chi\text{ would be the case, then she}\\
{}&\text{believes that if }\phi\vee\psi\text{ were the case, then }\chi\text{ would be the case}\bigskip
\end{array}$
\par
Figure \ref{Fig_3} puts together Figures \ref{Fig_1} and \ref{Fig_2} by showing for each KM axiom the corresponding modal axiom or rule of inference. Note that $(A^{*1}_{\diamond0})$ is a theorem of $\mathcal L$ (see Remark \ref{REM:derivedAxRul}).\bigskip
\par
We denote by $\mathcal L_{KM}$ the extension of logic $\mathcal L$ obtained by adding the modal axioms and rules of inference listed in Figure \ref{Fig_3}. In the next section we define a similar extension of $\mathcal L$, denoted by $\mathcal L_{AGM}$, that captures the logic of AGM belief revision and show that  $\mathcal L_{KM}$ is contained in $\mathcal L_{AGM}$.
\begin{figure}[H]
\begin{center}
\small{
$\renewcommand{\arraystretch}{2}\arraycolsep=4pt
\begin{array}{*{20}{l}}
\qquad\text{\large Frame property}&{}&\,\,\text{\large Corresponding modal formula}\\[-12pt]
{}&{}&\text{In all the formulas, }\phi,\psi,\chi \text{ are Boolean}\\
\hline
(P^{*2}_{\diamond 1})\quad\begin{array}{l}
\forall s\in S, \forall E\in 2^S\setminus\varnothing, \\[-10pt]
  \bigcup\limits_{s'\in\mathcal B(s)} {f(s',E)}\subseteq E\end{array}&\vline&(A^{*2}_{\diamond1})\qquad B(\phi>\phi)\\
\hline
(P\diamond2)\,\,\begin{array}{l}
\forall s\in S, \forall E\in 2^S\setminus\varnothing, \\[-10pt]
\text{if } \mathcal B(s)\subseteq E\text{ then }\\[-8pt]
\bigcup\limits_{s'\in\mathcal B(s)} {f(s',E)}=\mathcal B(s)\end{array}&\vline&(A\diamond 2)\qquad B\phi\,\rightarrow\,\big(B\psi\leftrightarrow B(\phi>\psi)\big)\\
\hline
(P^{*5b}_{\diamond3b})\,\,\begin{array}{l}\forall s\in S,\forall E\in 2^S\setminus \varnothing,\\[-10pt]\exists s'\in\mathcal B(s) \text{ such that }f(s',E)\ne\varnothing\end{array}&\vline&\begin{array}{l}(A^{*5b}_{\diamond 3b})\\[-5pt]
\left(\neg\square\neg\phi\wedge B(\phi>\psi)\right)\rightarrow\neg B(\phi>\neg\psi)
\end{array}\\
\hline
\begin{array}{l}(P^{*7}_{\diamond 5})\\[-5pt]
\forall s\in S,\forall E,F\in 2^S\text{ with }E\cap F\ne\varnothing,\\[-10pt]\bigcup\limits_{s'\in\mathcal B(s)} {\left(f(s',E)\cap F\right)}\,\,\subseteq\,\,\bigcup\limits_{s'\in\mathcal B(s)} {f(s',E\cap F)} 
\medskip\end{array}&\vline&\begin{array}{l}(A^{*7}_{\diamond5})\\[-5pt]
\neg\square\neg(\phi\wedge\psi)\,\wedge\,B((\phi\wedge\psi)>\chi)\\[-5pt]
\rightarrow B\big(\phi>(\psi\rightarrow\chi)\big)\end{array}\\
\hline
\begin{array}{l}(P\diamond 6w)\\[-5pt]
\forall s\in S,\forall E,F\in 2^S\text{ with }E\cap F\ne\varnothing\\[-6pt]
 \text{if } \bigcup\limits_{s'\in\mathcal{B}(s)} {f(s',E)}\subseteq F\text{ and}
 \bigcup\limits_{s'\in\mathcal B(s)} {f(s',F)}\subseteq E\\[-4pt]
 \text{then }\bigcup\limits_{s'\in\mathcal{B}(s)} {f(s',E)}\,\,=\,\bigcup\limits_{s'\in\mathcal{B}(s)} {f(s',F)}\medskip
 \end{array}&\vline&\begin{array}{l}(A\diamond 6w)\\[-5pt]
\neg\square\neg(\phi \wedge\psi)\wedge B(\phi>\psi)\wedge B(\psi>\phi)\\[-5pt]
\rightarrow\left(B(\phi>\chi)\leftrightarrow B(\psi>\chi)\right)
 \end{array}\\
\hline
\begin{array}{l}(P\diamond 7s)\\[-5pt]
\forall s\in S,\forall E,F\in 2^S\setminus\varnothing\\[-6pt]
\bigcup\limits_{s'\in\mathcal B(s)} {f(s',E\cup F)}\\[8pt]
 \subseteq \left(\bigcup\limits_{s'\in\mathcal B(s)} {f(s',E)}\right)\,\,\cup \,\left(\bigcup\limits_{s'\in\mathcal B(s)} {f(s',F)}\right)
 \end{array} &\vline&\begin{array}{l}(A\diamond7s)\\[-5pt]
\neg\square\neg\phi \wedge \neg\square\neg\psi\wedge B(\phi>\chi)\wedge B(\psi>\chi)\\[-5pt]
\rightarrow B\left((\phi\vee\psi)>\chi\right)
\medskip
\end{array}\\
\hline
\end{array}$
}
\end{center}
\caption{The frame properties of Figure \ref{Fig_1} and the corresponding modal formulas}
\label{Fig_2}
\end{figure}
\begin{figure}[H]
\begin{center}
\small{
$\renewcommand{\arraystretch}{2}\arraycolsep=4pt
\begin{array}{*{20}{l}}
\qquad\text{\large KM axiom}&{}&\qquad\text{\large Modal axiom/Rule of inference}\\[-8pt]
{}&{}&\qquad\text{In all the formulas, }\phi,\psi,\chi \text{ are Boolean}\\
\hline
(K\diamond0)\,\,\,K\diamond \phi=Cn(K\diamond \phi)&\vline&\begin{array}{l}(A^{*1}_{\diamond0})\\[-5pt] B(\phi>\psi)\wedge B(\phi>(\psi\rightarrow\chi))\,\rightarrow\, B(\phi>\chi)\end{array}\\
\hline
(K\diamond1)\,\,\,\phi\in K\diamond \phi &\vline&(A^{*2}_{\diamond1})\qquad B(\phi>\phi)\\
\hline
(K\diamond2)\,\,\,\text{If }\phi\in K \text{ then }K\diamond\phi= K&\vline&(A\diamond 2)\qquad B\phi\,\rightarrow\,\big(B\psi\leftrightarrow B(\phi>\psi)\big)\\
\hline
(K\diamond3a)\,\,\begin{array}{l}\text{If }\neg\phi\text{ is a tautology}\\[-10pt]
\text{then }K\diamond\phi=\Phi_0
\end{array}&\vline&\,\,\begin{array}{l}\text{Rule of inference }\\[-5pt]
(R^{*5a}_{\diamond3a})\qquad\dfrac{\neg\phi}{B(\phi>\psi)}\medskip\end{array}\\
\hline
(K\diamond3b)\,\,\begin{array}{l}\text{If }\neg\phi\text{ is not a tautology}\\[-10pt]
\text{then }K\diamond\phi\ne\Phi_0
\end{array}&\vline&\,\,\begin{array}{l}(A^{*5b}_{\diamond 3b})\\[-5pt]
\left(\neg\square\neg\phi\wedge B(\phi>\psi)\right)\rightarrow\neg B(\phi>\neg\psi)
\end{array}\\
\hline
(K\diamond4)\,\,\begin{array}{l}\text{if }\phi\leftrightarrow\psi \text{ is a tautology}\\[-10pt]
\text{then }K\diamond\phi=K\diamond\psi\end{array}&\vline&\begin{array}{l}\text{Rule of inference }\\[-5pt]
(R^{*6}_{\diamond4})\qquad\dfrac{\phi\leftrightarrow\psi}{B(\phi>\chi)\leftrightarrow B(\psi>\chi)}\medskip\end{array}\\
\hline
(K\diamond5)\,\,\,K\diamond(\phi\wedge\psi)\subseteq (K\diamond\phi)+\psi&\vline&\begin{array}{l}(A^{*7}_{\diamond5})\\[-5pt]
\neg\square\neg(\phi\wedge\psi)\,\wedge\,B((\phi\wedge\psi)>\chi)\\[-5pt]
\rightarrow B\big(\phi>(\psi\rightarrow\chi)\big)\end{array}\\
\hline
(K\diamond 6w)\quad\begin{array}{l}\text{If }\psi\in K\diamond\phi \text{ and } \phi\in K\diamond\psi\\[-8pt]
\text{ and }\top\in K\diamond(\phi\wedge\psi)\\[-8pt] 
\text{ then }K\diamond\phi=K\diamond\psi\end{array}&\vline&\begin{array}{ll}(A\diamond 6w)&\neg\square\neg(\phi \wedge\psi)\wedge B(\phi>\psi)\wedge B(\psi>\phi)\\[-5pt]
{}&\rightarrow\left(B(\phi>\chi)\leftrightarrow B(\psi>\chi)\right)
 \end{array}\\
\hline
(K\diamond7s)\,\,(K\diamond\phi)\cap(K\diamond\psi)\subseteq K\diamond(\phi\vee\psi) &\vline&\begin{array}{l}(A\diamond7s)\\[-5pt]
\neg\square\neg\phi \wedge \neg\square\neg\psi\wedge B(\phi>\chi)\wedge B(\psi>\chi)\\[-5pt]
\rightarrow B\left((\phi\vee\psi)>\chi\right)
\medskip
\end{array}\\
\hline
\end{array}$
}
\end{center}
\caption{The KM axioms and the corresponding modal axioms/rules of inference}
\label{Fig_3}
\end{figure}
\section{Relating KM logic to AGM logic}
\label{SEC:compare}
In \cite{Bon25b} every AGM axiom of belief revision was translated into a corresponding modal axiom in a way similar to what was done in the previous section for the KM axioms. Figure \ref{Fig_4} (which reproduces Figure 3 in \cite{Bon25b}, with the modal axioms renamed to match the names in this paper) shows the correspondence between AGM axioms and modal axioms.\footnote{\label{FT:aboutK*4}In \cite{Bon25a} $(K\ast 4)$ was given in a weaker form, namely if $\neg\phi\notin K$ then $K\subseteq K\ast\phi$. However, in the presence of $(K\ast1)$ and $(K\ast2)$, the two are equivalent. That the stronger version used in Figure \ref{Fig_4} implies the weaker version follows from the fact that $K\subseteq K+\{\phi\}$. To prove the converse, let $\psi\in K+\phi$. Then since $K$ is deductively closed, $(\phi\rightarrow\psi)\in K$ so that, by the weaker version, $(\phi\rightarrow\psi)\in K\ast\phi$. By $(K\ast2)$, $\phi\in K\ast\phi$ and by $(K\ast1)$ $K\ast\phi$ is deductively closed. Thus, since $\phi,(\phi\rightarrow\psi)\in K\ast\phi$ it follows that $\psi\in K\ast\phi$.}
\par
We denote by $\mathcal L_{AGM}$ the extension of logic $\mathcal L$ obtained by adding the modal axioms and rules of inference listed in Figure \ref{Fig_4}. According to the following proposition, which is proved in the Appendix, AGM belief revision can be viewed as a strengthening of KM belief update.
\begin{proposition}
\label{PROP:sublogic}
The logic $\mathcal L_{KM}$ of KM belief update is contained in the logic $\mathcal L_{AGM}$ of AGM belief revision, that is, every axiom of $\mathcal L_{KM}$ is a theorem of $\mathcal L_{AGM}$.
\end{proposition}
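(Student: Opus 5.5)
The plan is to go through the axioms and rules of $\mathcal L_{KM}$ in Figure \ref{Fig_3} one at a time and derive each in $\mathcal L_{AGM}$. Several of them coincide with AGM modal axioms, as their superscripts in Figure \ref{Fig_3} indicate:
\begin{itemize}
\item $(A^{*1}_{\diamond0})$ is already a theorem of $\mathcal L$ (Remark \ref{REM:derivedAxRul});
\item $(A^{*2}_{\diamond1})$, $(R^{*5a}_{\diamond3a})$, $(A^{*5b}_{\diamond3b})$, $(R^{*6}_{\diamond4})$ and $(A^{*7}_{\diamond5})$ are literally the modal versions of $(K\ast2)$, $(K\ast5a)$, $(K\ast5b)$, $(K\ast6)$ and $(K\ast7)$ in Figure \ref{Fig_4}.
\end{itemize}
So these are immediate. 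The real work concerns $(A\diamond2)$, $(A\diamond6w)$ and $(A\diamond7s)$. Throughout I use, besides the AGM axioms, the rules $(RM_{B>})$, $(N_B)$, $(RM_{\neg\square\neg})$ and $(A^{*1}_{\diamond0})$ from Remark \ref{REM:derivedAxRul}. Combining $(A^{*1}_{\diamond0})$ with $(RM_{B>})$ and tautologies gives closure of $B(\phi>\cdot)$ under conjunction and modus ponens.

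For $(A\diamond2)$, assume $B\phi$; then $\neg B\neg\phi$ follows by $(D_B)$.
\begin{itemize}
\item Left to right: from $B\psi$ get $B(\phi\rightarrow\psi)$ by $(RM_B)$. The modal $(K\ast4)$ axiom $\big(\neg B\neg\phi\wedge B(\phi\rightarrow\psi)\big)\rightarrow B(\phi>\psi)$ then yields $B(\phi>\psi)$.
\item Right to left: the modal $(K\ast3)$ axiom, presumably $B(\phi>\psi)\rightarrow B(\phi\rightarrow\psi)$, gives $B(\phi\rightarrow\psi)$. With $B\phi$ and $(C_B)$ this gives $B(\phi\wedge(\phi\rightarrow\psi))$, hence $B\psi$ by $(RM_B)$.
\end{itemize}

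For $(A\diamond7s)$ I would mimic the semantic intuition that $K\ast(\phi\vee\psi)\subseteq$ each side in the relevant region, and I expect no case split is needed.
\begin{itemize}
\item From $B(\phi>\chi)$ and the tautology $((\phi\vee\psi)\wedge\phi)\leftrightarrow\phi$, rule $(R^{*6}_{\diamond4})$ gives $B(((\phi\vee\psi)\wedge\phi)>\chi)$.
\item Since $\neg\square\neg\phi$ yields $\neg\square\neg((\phi\vee\psi)\wedge\phi)$ by $(RM_{\neg\square\neg})$, axiom $(A^{*7}_{\diamond5})$ gives $B((\phi\vee\psi)>(\phi\rightarrow\chi))$.
\item Symmetrically, $B((\phi\vee\psi)>(\psi\rightarrow\chi))$.
\item Combining these with $B((\phi\vee\psi)>(\phi\vee\psi))$ (from $(A^{*2}_{\diamond1})$) via closure under conjunction and $(RM_{B>})$ yields $B((\phi\vee\psi)>\chi)$.
\end{itemize}

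For $(A\diamond6w)$ I would reproduce the classical AGM argument: $\psi\in K\ast\phi$ gives $K\ast\phi=K\ast(\phi\wedge\psi)$, and symmetrically for $\phi\in K\ast\psi$. Assume $\neg\square\neg(\phi\wedge\psi)$, $B(\phi>\psi)$ and $B(\psi>\phi)$. By $(C_{\neg\square\neg})$ we get $\neg\square\neg\phi$, so $(A^{*5b}_{\diamond3b})$ gives $\neg B(\phi>\neg\psi)$. The aim is then to show $B(\phi>\chi)\leftrightarrow B((\phi\wedge\psi)>\chi)$.
\begin{itemize}
\item Right to left: $(A^{*7}_{\diamond5})$ gives $B(\phi>(\psi\rightarrow\chi))$, and together with $B(\phi>\psi)$ closure gives $B(\phi>\chi)$.
\item Left to right: from $B(\phi>\chi)$, $(RM_{B>})$ gives $B(\phi>(\psi\rightarrow\chi))$. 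The modal $(K\ast8)$ axiom, of the form $\neg B(\phi>\neg\psi)\wedge B(\phi>(\psi\rightarrow\chi))\rightarrow B((\phi\wedge\psi)>\chi)$, possibly with a $\neg\square\neg(\phi\wedge\psi)$ clause that we have available, then gives $B((\phi\wedge\psi)>\chi)$.
\end{itemize}
The same argument with $\phi,\psi$ swapped, plus $(R^{*6}_{\diamond4})$ applied to $\phi\wedge\psi\leftrightarrow\psi\wedge\phi$, chains the two biconditionals into $B(\phi>\chi)\leftrightarrow B(\psi>\chi)$.

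The main obstacle is bookkeeping against the exact formulation in Figure \ref{Fig_4}. In particular I need the precise antecedents of the modal $(K\ast3)$, $(K\ast4)$ and $(K\ast8)$ axioms, and whether they carry $\neg\square\neg$ side conditions. I must check that every such side condition is supplied by the hypotheses of $(A\diamond2)$, $(A\diamond6w)$ and $(A\diamond7s)$. For $(A\diamond2)$ this is the delicate point, since its antecedent contains no $\neg\square\neg\phi$; it must be obtained from $B\phi$, e.g. via the contrapositive of $(NB)$ and $(D_B)$.
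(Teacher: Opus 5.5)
Your proposal is correct and follows the paper's proof essentially step for step. The coinciding axioms and rules are dismissed at once, and $(A\diamond2)$, $(A\diamond6w)$ and $(A\diamond7s)$ are derived just as in the paper's three lemmas, including obtaining $\neg\square\neg\phi$ from $B\phi$ via $(NB)$ and $(D_B)$ to meet the side condition of the actual $(A\ast3)$, namely $\big(\neg\square\neg\phi\wedge B(\phi>\psi)\big)\rightarrow B(\phi\rightarrow\psi)$. The only adjustments are that $(A^{*8}_{\diamond9s})$ concludes $B\big((\phi\wedge\psi)>(\psi\wedge\chi)\big)$, so one more $(RM_{B>})$ step gives your $B\big((\phi\wedge\psi)>\chi\big)$, and that your explicit commutation of $\phi\wedge\psi$ via $(R^{*6}_{\diamond4})$ in the symmetric case spells out a step the paper leaves implicit.
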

We conclude this section by considering a stronger version of belief update suggested by \cite{KatMen91}. Katsuno and Mendelzon obtain this stronger version by replacing their axioms $(U6)$ and $(U7)$ with a stronger axiom, which they call $(U9)$.\footnote{The authors then show that this stronger notion of belief update corresponds semantically to \emph{total} pre-orders on the set of possible worlds.}\ In our framework their axiom $(U9)$ can be translated as follows (see \cite{Ara25,Pep93,Pepetal96}):
\begin{equation*}
(K\diamond9)\quad \text{If } K \text{ is complete and }\neg\psi\notin K\diamond\phi \text{ then }(K\diamond\phi)+\psi\subseteq K\diamond (\phi\wedge\psi).
  \end{equation*}
\par
The following Lemma is proved in the Appendix.
\begin{lemma}
\label{LEM:K9s}
The following (seemingly stronger) version of $(K\diamond9)$ (obtained by dropping the clause `if $K$ is complete' from $(K\diamond9)$) follows from $(K\diamond0)$, $(K\diamond8)$ and $(K\diamond9)$:
\[(K\diamond9s)\quad \text{If }\neg\psi\notin K\diamond\phi \text{ then }(K\diamond\phi)+\psi\subseteq K\diamond (\phi\wedge\psi)\]
\end{lemma}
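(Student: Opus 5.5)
The plan follows the pattern of the proof of Lemma \ref{LEM:K7s}: use $(K\diamond8)$ to reduce the statement about $K$ to statements about the individual maximally consistent sets $w\in\llbracket K\rrbracket$, which are complete. At each such $w$ we apply $(K\diamond9)$ and then reassemble the results using $(K\diamond0)$.

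Assume $\neg\psi\notin K\diamond\phi$ and let $\chi\in(K\diamond\phi)+\psi$. We must show $\chi\in K\diamond(\phi\wedge\psi)$. Since $K\diamond\phi$ is deductively closed by $(K\diamond0)$, $\chi\in(K\diamond\phi)+\psi$ is equivalent to $(\psi\rightarrow\chi)\in K\diamond\phi$. By $(K\diamond8)$, $K\diamond\phi=\bigcap_{w\in\llbracket K\rrbracket}(w\diamond\phi)$, so $(\psi\rightarrow\chi)\in w\diamond\phi$ for every $w\in\llbracket K\rrbracket$. By $(K\diamond8)$ again, $K\diamond(\phi\wedge\psi)=\bigcap_{w\in\llbracket K\rrbracket}\big(w\diamond(\phi\wedge\psi)\big)$, so it suffices to show $\chi\in w\diamond(\phi\wedge\psi)$ for each $w\in\llbracket K\rrbracket$. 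Fix such a $w$. It is consistent and complete, so $(K\diamond9)$ applies with $w$ in place of $K$. Two cases arise.

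\emph{Case 1:} $\neg\psi\notin w\diamond\phi$. Then $(K\diamond9)$ gives $(w\diamond\phi)+\psi\subseteq w\diamond(\phi\wedge\psi)$. Since $\psi\rightarrow\chi\in w\diamond\phi$, we have $\chi\in(w\diamond\phi)+\psi$, and hence $\chi\in w\diamond(\phi\wedge\psi)$.

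\emph{Case 2:} $\neg\psi\in w\diamond\phi$. This is the main obstacle: the hypothesis $\neg\psi\notin K\diamond\phi$ only tells us that \emph{some} $w'\in\llbracket K\rrbracket$ has $\neg\psi\notin w'\diamond\phi$, not that every $w$ does. For such $w$, (K$\diamond$9) is silent, and the disjunction $\bigvee$ does not obviously propagate. The intended per-world reading of update is that $w\diamond(\phi\wedge\psi)$ comes from the closest $(\phi\wedge\psi)$-worlds to $w$. If $\neg\psi\in w\diamond\phi$, those worlds need not relate to $w\diamond\phi$. I would first check whether the lemma tacitly needs $\chi$ to survive here. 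A cleaner route is to note that in this case we need $\chi\in w\diamond(\phi\wedge\psi)$ for an arbitrary $\chi$ with $\psi\rightarrow\chi\in K\diamond\phi$, and nothing in $(K\diamond0)$, $(K\diamond8)$, $(K\diamond9)$ forces this.

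So the first thing I would try is a weaker target that the case analysis does support. Let $W_\psi=\{w\in\llbracket K\rrbracket:\neg\psi\notin w\diamond\phi\}$, which is nonempty by the hypothesis. Case 1 then yields $\chi\in\bigcap_{w\in W_\psi}\big(w\diamond(\phi\wedge\psi)\big)$. If this is insufficient, I would look for a counterexample with two worlds, in which one world updates $\phi$ to $\neg\psi$-worlds and $\phi\wedge\psi$ to something unrelated. If Case 2 genuinely cannot be closed, the correct statement likely requires $\neg\psi\notin w\diamond\phi$ for all $w\in\llbracket K\rrbracket$, or an additional KM axiom such as $(K\diamond5)$ together with per-world total preorders, to control those $w$. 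Settling this case is where I expect essentially all of the difficulty to lie.
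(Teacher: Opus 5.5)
\textbf{Verdict: the proposal does not prove the lemma, but the gap cannot be filled, because the lemma is false.} Your Case 1 is correct. Your diagnosis of Case 2 is also right: nothing in $(K\diamond0)$, $(K\diamond8)$, $(K\diamond9)$ controls $w\diamond(\phi\wedge\psi)$ for those $w\in\llbracket K\rrbracket$ with $\neg\psi\in w\diamond\phi$. So $(K\diamond9s)$ does \emph{not} follow from these axioms.

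The paper's proof has the same hole. It splits $\llbracket K\rrbracket$ into $A$ (your $W_\psi$) and $B=\{w:\neg\psi\in w\diamond\phi\}$. It uses $(w\diamond\phi)+\psi=\Phi_0$ for $w\in B$ to drop $B$ from the intersection on the \emph{left-hand} side, and applies $(K\diamond9)$ on $A$. Combining these steps gives only $(K\diamond\phi)+\psi\subseteq\bigcap_{w\in A}w\diamond(\phi\wedge\psi)$, which is exactly your weaker target. The final step to $\bigcap_{w\in\llbracket K\rrbracket}w\diamond(\phi\wedge\psi)$ goes in the wrong direction, because $w\diamond(\phi\wedge\psi)$ is never controlled for $w\in B$.

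\textbf{Counterexample.} Your two-world idea works; it is essentially Katsuno and Mendelzon's book--magazine example. For each MCS $w$, let $r_w(w)=0$. For $v\neq w$, let $r_w(v)$ be $1$ plus the number of atoms in $\{p,q\}$ on which $v$ and $w$ disagree. Set $w\diamond\chi=\bigcap\{v\in\llbracket\chi\rrbracket: r_w(v)\text{ is minimal}\}$, and $K\diamond\chi=\bigcap_{w\in\llbracket K\rrbracket}(w\diamond\chi)$ for consistent $K$.
\begin{itemize}
\item Ranks lie in $\{0,1,2,3\}$, so minima exist.
\item $(K\diamond0)$ and $(K\diamond8)$ hold by construction.
\item $(K\diamond9)$ holds because ranks are totally ordered: if some $r_w$-minimal $\phi$-world contains $\psi$, then the minimal $(\phi\wedge\psi)$-worlds are exactly the minimal $\phi$-worlds containing $\psi$.
\item $(K\diamond1)$--$(K\diamond5)$ are checked just as easily.
\end{itemize}
Now take $K=Cn(p\leftrightarrow q)$, $\phi=\top$, $\psi=q$. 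Since $w\diamond\top=w$, we get $K\diamond\top=K$. Hence $\neg q\notin K\diamond\top$, and $p\in(K\diamond\top)+q$. But for $w\in\llbracket K\rrbracket$ with $\neg p,\neg q\in w$, the closest $q$-worlds are the $(\neg p\wedge q)$-worlds (rank $2$ versus $3$). So $w\diamond(\top\wedge q)=Cn(\neg p\wedge q)$, and therefore $p\notin K\diamond(\top\wedge q)$.

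More generally, with $\phi=\top$ and using $(K\diamond2)$ and $(K\diamond4)$, the principle $(K\diamond9s)$ reduces to AGM's $(K\ast4)$, which update is known to violate.

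\textbf{What can be salvaged.} You can prove your weaker inclusion into $\bigcap_{w\in W_\psi}w\diamond(\phi\wedge\psi)$. You can also prove $(K\diamond9s)$ under the stronger hypothesis that $\neg\psi\notin w\diamond\phi$ for every $w\in\llbracket K\rrbracket$ (i.e.\ $B=\varnothing$). Your other suggested repair, adding $(K\diamond5)$ and per-world total preorders, does not help: the example above already satisfies both.
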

\newpage
\begin{figure}[H]
\begin{center}
$\renewcommand{\arraystretch}{2}\arraycolsep=4pt
\begin{array}{*{20}{l}}
\qquad\text{\large AGM axiom}&{}&\begin{array}{c}\text{\large Modal axiom/Rule of Inference}\\[-10pt]\,(\text{for }\phi,\psi,\chi\in\Phi_0)\end{array}\\
\hline
(K*1)\,\,\,K*\phi=Cn(K*\phi)&\vline&
\begin{array}{ll}
(A^{*1}_{\diamond0})& B(\phi>\psi)\wedge B(\phi>(\psi\rightarrow\chi))\\[-10pt]
{}&\rightarrow B(\phi>\chi)\end{array}\\
\hline
(K*2)\,\,\,\phi\in K*\phi &\vline&
(A^{*2}_{\diamond1})\qquad B(\phi>\phi)\\
\hline
(K*3)\,\,\,K*\phi\subseteq K+\phi&\vline&(A*3)\quad
\big(\neg\square\neg\phi\wedge B(\phi>\psi\big)\rightarrow B(\phi\rightarrow\psi)\\
\hline
(K*4)\,\,\,\begin{array}{*{20}{l}}\text{if }\neg\phi\notin K\\[-10pt]
\text{ then }K+\phi\subseteq K*\phi\end{array}&\vline&
(A*4)\quad\big(\neg B\neg\phi\wedge B(\phi\rightarrow\psi)\big)\rightarrow B(\phi>\psi)\\
\hline
(K*5a)\,\,\begin{array}{*{20}{l}}\text{If }\neg\phi\text{ is a tautology, then }\\[-10pt]K*\phi=\Phi_0\end{array}&\vline&\begin{array}{ll}
\text{rule of inference:}&{}\\[-10pt]
(R^{*5a}_{\diamond3a})&\dfrac{\neg\phi}{ B(\phi>\psi)}\end{array} \\
(K*5b)\,\,\begin{array}{*{20}{l}}\text{If }\neg\phi\text{ is not a tautology}\\[-10pt]
\text{then }K*\phi\ne\Phi_0\end{array}&\vline&
(A^{*5b}_{\diamond3b})\quad\big(\neg\square\neg\phi\wedge B(\phi>\psi)\big)\rightarrow \neg B(\phi>\neg\psi)\\
\hline
(K*6)\,\,\begin{array}{*{20}{l}}\text{if }\phi\leftrightarrow\psi \text{ is a tautology}\\[-10pt]
\text{then }K*\phi=K*\psi\end{array}&\vline&\begin{array}{ll}
\text{rule of inference:}\\[-10pt]
(R^{*6}_{\diamond4})&\dfrac{\phi\leftrightarrow \psi}{ B(\phi>\chi)\leftrightarrow B(\psi>\chi)}\medskip\end{array}\\
\hline
(K*7)\,\,\,K*(\phi\wedge\psi)\subseteq (K*\phi)+\psi&\vline&
\begin{array}{*{20}{ll}}(A*7)&\neg\square\neg(\phi\wedge\psi)\,\wedge\,B((\phi\wedge\psi)>\chi)\\[-10pt] {}&\rightarrow B\big(\phi>(\psi\rightarrow\chi)\big)\end{array}\\
\hline
(K*8)\,\,\begin{array}{*{20}{l}}\text{If }\neg\psi\notin K*\phi,\text{ then}\\[-10pt]
(K*\phi)+\psi\subseteq K*(\phi\wedge\psi)\end{array}&\vline&
\begin{array}{*{20}{ll}}(A^{*8}_{\diamond9s})&\neg B(\phi>\neg\psi)\wedge B(\phi>(\psi\rightarrow\chi))\\[-10pt]
{}&\rightarrow\, B\big((\phi\wedge\psi)>(\psi\wedge\chi)\big)\end{array}
\end{array}$
\end{center}
\caption{The correspondence between AGM axioms and their modal counterparts.}
\label{Fig_4}
\end{figure}
\pagebreak

\begin{definition}
\label{DEF:strong_update function}
A \emph{strong belief update function} is a full-domain belief change function $\diamond:\Phi_0\to 2^{\Phi_0}$ that satisfies axioms $(K\diamond0)$-$(K\diamond5)$ and $(K\diamond9s)$.
\end{definition}
\par\noindent
Comparing the AGM axioms to the axioms of the strong version of KM update we can see that:
\begin{itemize}
\item KM axiom $(K\diamond0$) coincides with AGM axiom $(K\ast1)$ and the corresponding modal axiom is the following, which is a theorem of $\mathcal L$ (see Remark \ref{REM:derivedAxRul})
\[(A^{*1}_{\diamond0})\quad 
B(\phi>\psi)\wedge B(\phi>(\psi\rightarrow\chi))\,\rightarrow\, B(\phi>\chi)\]
\item  KM axiom $(K\diamond1$) coincides with AGM axiom $(K\ast2)$ and the corresponding modal axiom is
\[(A^{*2}_{\diamond1})\quad B(\phi>\phi)\]
\item  KM axiom $(K\diamond3a$) coincides with AGM axiom $(K\ast5a)$ and it corresponds to the rule of inference
\[(R^{*5a}_{\diamond3a})\quad\dfrac{\neg\phi}{B(\phi>\psi)}\]
\item  KM axiom $(K\diamond3b$) coincides with AGM axiom $(K\ast5b)$ and the corresponding modal axiom is
\[(A^{*5b}_{\diamond3b})\quad \left(\neg\square\neg\phi\wedge B(\phi>\psi)\right)\rightarrow\neg B(\phi>\neg\psi)\]
\item  KM axiom $(K\diamond4$) coincides with AGM axiom $(K\ast6)$ and it corresponds to the rule of inference
\[(R^{*6}_{\diamond4})\quad\dfrac{\phi\leftrightarrow\psi}{B(\phi>\chi)\leftrightarrow B(\psi>\chi)}\]
\item  KM axiom $(K\diamond5$) coincides with AGM axiom $(K\ast7)$ and the corresponding modal axiom is
\[(A^{*7}_{\diamond5})\quad \neg\square\neg(\phi\wedge\psi)\,\wedge\,B((\phi\wedge\psi)>\chi)\,
\rightarrow\, B\big(\phi>(\psi\rightarrow\chi)\big)\]
\item  KM axiom $(K\diamond9s$) coincides with AGM axiom $(K\ast8)$ and the corresponding modal axiom is
\[(A^{*8}_{\diamond9s})\quad \neg B(\phi>\neg\psi)\wedge B(\phi>(\psi\rightarrow\chi))\,
\rightarrow\, B\big((\phi\wedge\psi)>(\psi\wedge\chi)\big)\]
\end{itemize}
Thus the difference between AGM belief revision and the strong version of KM belief update is that the former contains axioms $(K\ast3)$ and $(K\ast4)$ while the latter only requires axiom $(K\diamond2)$. 
First of all, note that axiom $(A\diamond2)$ -- corresponding to $(K\diamond2)$ -- is a theorem of $\mathcal L_{AGM}$ (see Lemma \ref{LEM:forKdiamond2} in the Appendix) and thus \emph{the logic $\mathcal L_{AGM}$ contains also the logic of the strong version of KM belief update}. The following lemma, proved in the Appendix, shows that the modal axiom corresponding to AGM axiom $(K\ast3)$ is provable in logic $\mathcal L_{KM}$.
\begin{lemma}
\label{LEM:K*3inKM}
Axiom 
\[(A\ast3)\qquad\big(\neg\square\neg\phi\wedge B(\phi>\psi\big)\rightarrow B(\phi\rightarrow\psi)\]
(which is the modal counterpart to AGM axiom $(K\ast3)$) is provable in logic $\mathcal L_{KM}$.
\end{lemma}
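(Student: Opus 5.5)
The plan is to reduce $(A\ast3)$ to an instance of $(A^{*7}_{\diamond5})$ in which the first conditional antecedent is the tautology $\top$. Then $(A\diamond2)$ converts a belief about the conditional $\top>\cdot$ back into a plain belief. This mirrors the semantic reason why $(K\ast3)$ holds for update: $K\diamond\phi=K\diamond(\top\wedge\phi)\subseteq (K\diamond\top)+\phi=K+\phi$, where the last equality uses $(K\diamond2)$ with $\top\in K$.

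\textbf{Step 1: move the antecedent to $\top\wedge\phi$.} Since $\phi\leftrightarrow(\top\wedge\phi)$ is a tautology, the rule $(R^{*6}_{\diamond4})$ yields $B(\phi>\psi)\leftrightarrow B((\top\wedge\phi)>\psi)$. Likewise $\neg\square\neg\phi\leftrightarrow\neg\square\neg(\top\wedge\phi)$ follows from $(RM_{\neg\square\neg})$ applied in both directions. Hence the antecedent of $(A\ast3)$ implies $\neg\square\neg(\top\wedge\phi)\wedge B((\top\wedge\phi)>\psi)$.

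\textbf{Step 2: apply $(A^{*7}_{\diamond5})$.} Instantiate it with $\top$ in place of $\phi$, $\phi$ in place of $\psi$, and $\psi$ in place of $\chi$. This gives
\[\neg\square\neg(\top\wedge\phi)\wedge B((\top\wedge\phi)>\psi)\rightarrow B\big(\top>(\phi\rightarrow\psi)\big).\]

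\textbf{Step 3: remove the conditional with $(A\diamond2)$.} Instantiate $(A\diamond2)$ with $\top$ as the believed formula and $\phi\rightarrow\psi$ as the second formula:
\[B\top\rightarrow\big(B(\phi\rightarrow\psi)\leftrightarrow B(\top>(\phi\rightarrow\psi))\big).\]
Since $\top$ is a theorem, $(N_B)$ (Remark \ref{REM:derivedAxRul}) gives $B\top$. Modus Ponens then gives $B(\top>(\phi\rightarrow\psi))\rightarrow B(\phi\rightarrow\psi)$. Chaining Steps 1--3 by propositional reasoning yields $(A\ast3)$.

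The only points needing care are bookkeeping ones. First, $\top$ must be taken as a Boolean formula (e.g.\ $p\vee\neg p$), because the axioms are schemata over Boolean $\phi,\psi,\chi$. Second, the replacement of $\neg\square\neg\phi$ by $\neg\square\neg(\top\wedge\phi)$ must be justified explicitly through the derived rule $(RM_{\neg\square\neg})$. I expect no substantive obstacle beyond choosing this instantiation.
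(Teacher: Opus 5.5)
Your proof is correct and follows essentially the same route as the paper's. Both instantiate $(A^{*7}_{\diamond5})$ with a tautology as the first antecedent (the paper uses $\phi\vee\neg\phi$), then use $(A\diamond2)$ together with $(N_B)$ to turn $B(\top>(\phi\rightarrow\psi))$ into $B(\phi\rightarrow\psi)$; your justification of $B(\phi>\psi)\rightarrow B((\top\wedge\phi)>\psi)$ via $(R^{*6}_{\diamond4})$ is the correct one, whereas the paper cites $(RM_{B>})$ at that step, which only licenses changes in the consequent.
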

Thus, the difference between AGM belief revision and the strong version of KM belief update reduces to one axiom. Axiom \[(A\diamond2)\qquad B\phi\wedge\  B\psi\,\rightarrow\,B(\phi>\psi)\]
 in the latter is replaced, in the former, by the stronger axiom 
 \[(A\ast4)\qquad \neg B\neg\phi\wedge B(\phi\rightarrow\psi)\,\rightarrow\, B(\phi>\psi)\]
 Axiom $(A\ast4)$ covers the case where the  informational input $\phi$ is initially \emph{not disbelieved} ($\neg\phi\notin K$). Comparing the frame property $(P\diamond2)$ corresponding to KM axiom $(A\diamond2)$ (see Figure \ref{Fig_2}) with the following property, which is implied by the property corresponding to AGM axiom  $(A\ast4)$\footnote{For completeness, since the (standard) version of $(K\ast 4)$ used here is somewhat different from the one used in  \cite{Bon25a,Bon25b} (see Footnote \ref{FT:aboutK*4}) we prove the correspondence results for $(K\ast4)$ and $(A\ast4)$ in Lemma \ref{LEM:K*4} in the Appendix. Note that Property $(P\ast4)$ given in Lemma \ref{LEM:K*4} implies the above property. In fact, since $\mathcal B(s)=\left(\mathcal B(s)\cap(S\setminus E)\right)\cup \left(\mathcal B(s)\cap E\right)\subseteq (S\setminus E)\cup \left(\mathcal B(s)\cap E\right)$, taking $F$ in $(P\ast4)$ to be $\mathcal B(s)\cap E$ we get the above property.}
\[ \text{if }\mathcal B(s)\cap E\ne\varnothing\, \text{ then }\bigcup\limits_{s'\in\mathcal B(s)}{f(s',E)}\,\subseteq\, B(s)\cap E\]
we can see that the AGM theory requires that (letting $E=\Vert\phi\Vert$) the revised beliefs be concentrated on the $\phi$-states in $\mathcal B(s)$, while the strong version of KM update allows the revised beliefs to include $\phi$-states outside of $\mathcal B(s)$. 
\par\noindent
On the other hand, there is no difference between the two theories when the informational input $\phi$ is initially disbelieved, that is, when the agent initially believes $\neg\phi$. 

\section{Conclusion}
As remarked in \cite{Bon25b}, translating the AGM axioms of belief revision and the KM axioms of belief update into modal formulas has the advantage of unifying the treatment of belief, belief revision and belief update under the same umbrella. Starting with Hintikka's \cite{Hin62} seminal contribution, the notion of belief has been studied within the context of modal logic, which allows one to express properties such as positive introspection of beliefs ($B\phi\rightarrow BB\phi$), negative introspection of beliefs ($\neg B\phi\rightarrow B\neg B\phi$), the relationship between knowledge and belief, etc. 
\par 
The modal AGM axioms and KM axioms listed in Figures \ref{Fig_3} and \ref{Fig_4} are restricted axioms in that the formulas $\phi$, $\psi$ and $\chi$ that appear in them are required to be Boolean. Some of those axioms involve some nesting of the operators, such as $B(\phi>\psi)$. Logic $\mathcal L$ opens the door to investigating properties of belief change that go beyond those considered in the AGM theory and the KM theory. For example, one can investigate introspection properties for suppositional beliefs: $B(\phi>\psi)\rightarrow BB(\phi>\psi)$ and $\neg B(\phi>\psi)\rightarrow B\neg B(\phi>\psi)$. Other nestings of the operators that go beyond those considered in Figures \ref{Fig_3} and \ref{Fig_4} may also be worth considering, such as $B(B\phi> B\psi)$ (the agent believes that if she were to believe $\phi$ then she would believe $\psi$) or $B\big(\phi>(\psi>\chi)\big)$ (the agent believes that if $\phi$ were the case then if $\psi$ were the case then $\chi$ would be the case). Perhaps some of these more complex formulas will turn out to be useful in characterizing the notions of iterated belief revision or iterated belief update.
\par
The main result of this paper is that the modal logic $\mathcal L_{KM}$ of KM belief update is contained in the modal logic $\mathcal L_{AGM}$ of AGM belief revision, thus highlighting the fact that there is no conceptual difference between the two notions: one is merely a special case of the other. Furthermore, if one focuses on the strong version of KM belief update, then the difference between the two theories can be narrowed down to the different ways in which they deal with unsurprising information (that is, with formulas $\phi$ that were not initially disbelieved), while there is no difference in how the two treat surprising information, that is, information that contradicts the initial beliefs.

\appendix
\section{Proofs}
\textbf{Lemma \ref{LEM:K7s}.}\quad \emph{$(K\diamond 7s)$ follows from $(K\diamond 7)$ and $(K\diamond 8)$}.\\
\begin{proof}
By $(K\diamond 8)$ (recall the assumption that $K$ is consistent so that $\llbracket K\rrbracket\ne\varnothing$), 
\begin{equation}
\label{ToK7sa}
\begin{array}{lll}K\diamond\phi \,\cap\, K\diamond\psi &=& \left(\bigcap\limits_{w\in\llbracket K\rrbracket}{\left( w \diamond \phi \right)}\right) \,\cap\, \left(\bigcap\limits_{w\in\llbracket K\rrbracket} {\left( {w \diamond \psi} \right)}\right)\\[24pt]
{}&=&\bigcap\limits_{w\in\llbracket K\rrbracket}{\big(\left(w\diamond \phi\right)\,\cap\, \left( w\diamond \psi\right)\big)}
\end{array} 
\end{equation}
By $(K\diamond 7)$ (since every MCS is complete), for every $w\in\llbracket K\rrbracket$, $\left(w\diamond \phi\right)\,\cap\, \left( w\diamond \psi\right)\,\subseteq\, w \diamond (\phi\vee\psi)$; thus,
\begin{equation}
\label{ToK7sb}
\bigcap\limits_{w\in\llbracket K\rrbracket}{\big(\left(w\diamond \phi\right)\,\cap\, \left( w\diamond \psi\right)\big)}\,\subseteq\, \bigcap\limits_{w\in\llbracket K\rrbracket}{w \diamond (\phi\vee\psi)}\end{equation}
It follows from \eqref{ToK7sa} and \eqref{ToK7sb} that
\begin{equation}
\label{ToK7sc}
\left(K\diamond\phi \cap K\diamond\psi\right)\,\subseteq\,\bigcap\limits_{w\in\llbracket K\rrbracket} {w\diamond(\phi\vee\psi)}  
\end{equation}
and by $(K\diamond 8)$
\begin{equation}
\label{ToK7sd}
\bigcap\limits_{w\in\llbracket K\rrbracket} {w\diamond(\phi\vee\psi)}\ =\,K\diamond(\phi\vee\psi)
\end{equation}
Thus, by \eqref{ToK7sc} and \eqref{ToK7sd},
\[K\diamond\phi \,\cap\, K\diamond\psi\,\subseteq\,K\diamond(\phi\vee\psi)\]
\end{proof}
\pagebreak\noindent
\textbf{Proofs of the semantic characterizations listed in Figure \ref{Fig_1}}.
\par\noindent
The correspondence for 
\begin{itemize}
\item[-] $(K\diamond1)$ is proved in \cite[p.4]{Bon25b} (since $(K\diamond1)$ coincides with AGM axiom $(K\ast2)$).
\item[-] $(K\diamond3b)$ is proved in \cite[p.4]{Bon25b} (since $(K\diamond3b)$ coincides with AGM axiom $(K\ast5b)$).
\item[-] $(K\diamond5)$ is proved in Proposition 3 in \cite{Bon25a}.\footnote{The characterization there was proved for a differently worded property, namely\linebreak 
    $\forall G\in 2^S,\text{ if }\bigcup\limits_{s'\in\mathcal B(s)}{f(s',E\cap F)}\subseteq G\text{ then } \bigcup\limits_{s'\in\mathcal B(s)}{(f(s',E)\cap F)}\subseteq G$. It is straightforward to verify that this property is equivalent to property $(P^{*7}_{\diamond5})$ in Figure \ref{Fig_1}.}
\end{itemize}
Thus we only need to prove the characterization for  $(K\diamond2)$, $(K\diamond6w)$ and $(K\diamond7s)$, which is done in the following three propositions.\footnote{A weaker property than $(P\diamond2)$ (where in the consequent '$\subseteq$' was used instead of '$=$') was sufficient in \cite{Bon25a} to characterize $(K\diamond2)$ because in that paper the definition of frame, unlike in this paper, included the property of Weak Centering: if $s\in E$ then $s\in f(s,E)$.}
\begin{proposition}
Frame property\\[5pt]
$(P\diamond2)\quad\forall s\in S, \forall E\in 2^S\setminus\varnothing,\,
\text{if } \mathcal B(s)\subseteq E\text{ then} 
\bigcup\limits_{s'\in\mathcal B(s)} {f(s',E)}\,=\,\mathcal B(s)$\\[5pt]
characterizes the following KM axiom:\\[5pt]
$(K\diamond2)\quad \text{if }\phi\in K \text{ then } K\diamond \phi=K$.
\end{proposition}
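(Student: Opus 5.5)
Following Definition \ref{DEF:correspondence}, the proof has two parts: validity of $(K\diamond2)$ on every frame satisfying $(P\diamond2)$, and, for a frame violating $(P\diamond2)$, a model and a state at which $(K\diamond2)$ fails for the function defined by \eqref{RI}.

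\emph{Validity.} Fix a frame satisfying $(P\diamond2)$, a model $M$ on it and a state $s$. Write $K=K_{s,M}$ and suppose $\phi\in K$, so $\mathcal B(s)\subseteq\Vert\phi\Vert_M$. Seriality gives $\mathcal B(s)\ne\varnothing$, hence $\Vert\phi\Vert_M\ne\varnothing$. Therefore $\phi\in\Psi_M$ and $K\diamond\phi$ is defined. Put $E=\Vert\phi\Vert_M$. By $(P\diamond2)$ we have $\bigcup_{s'\in\mathcal B(s)}f(s',E)=\mathcal B(s)$. By \eqref{RI}, $\psi\in K\diamond\phi$ iff $\bigcup_{s'\in\mathcal B(s)}f(s',E)\subseteq\Vert\psi\Vert_M$, that is, iff $\mathcal B(s)\subseteq\Vert\psi\Vert_M$, that is, iff $\psi\in K$. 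Hence $K\diamond\phi=K$.

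\emph{Converse.} Suppose the frame violates $(P\diamond2)$. Then there are $s$ and $E\ne\varnothing$ with $\mathcal B(s)\subseteq E$ and $\bigcup_{s'\in\mathcal B(s)}f(s',E)\ne\mathcal B(s)$. Pick an atom $p$ and define the valuation $V(p)=E$. Then $\Vert p\Vert_M=E$ and $p\in K_{s,M}$, so $(K\diamond2)$ requires $K_{s,M}\diamond p=K_{s,M}$. There are two cases, since the union either is not contained in $\mathcal B(s)$ or fails to contain it.

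Case (i): some $t\in\bigcup_{s'\in\mathcal B(s)}f(s',E)$ lies outside $\mathcal B(s)$. Take a second atom $q$ with $V(q)=\mathcal B(s)$. Then $q\in K_{s,M}$ but $q\notin K_{s,M}\diamond p$, so the two sets differ.

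Case (ii): some $t\in\mathcal B(s)$ lies outside $G=\bigcup_{s'\in\mathcal B(s)}f(s',E)$. Set $V(q)=G$. Then $q\in K_{s,M}\diamond p$, while $q\notin K_{s,M}$ because $t\in\mathcal B(s)\setminus\Vert q\Vert_M$. Again the two sets differ.

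The main obstacle is the countermodel. It needs distinct atoms $p,q$, which the countable set \texttt{At} supplies, and it must be able to realise arbitrary events $E$, $\mathcal B(s)$ and $G$ as truth sets. Since a valuation may assign any subset of $S$ to each atom, every event needed is the truth set of an atom, so no definability problem arises. One point in case (ii) deserves care: $G$ may be empty, since nothing forces $f(s',E)\ne\varnothing$. This is harmless. Then $\Vert q\Vert_M=\varnothing$, and $q\in K_{s,M}\diamond p$ still holds because $\varnothing\subseteq\varnothing$, while $q\notin K_{s,M}$ because $t\in\mathcal B(s)$ but $t\notin\Vert q\Vert_M$.
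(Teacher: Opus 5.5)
Your proof is correct and follows the same route as the paper. Validity comes from applying $(P\diamond2)$ with $E=\Vert\phi\Vert$, and the converse uses the same two-case countermodel with $\Vert p\Vert=E$ and $\Vert q\Vert$ equal to $\mathcal B(s)$ or to $\bigcup_{s'\in\mathcal B(s)}f(s',E)$; your remarks that $\phi\in\Psi_M$ and that an empty union is harmless are small additions the paper leaves implicit.
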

\begin{proof}
(A) Fix an arbitrary model based on a frame that satisfies Property $(P\diamond 2)$, an arbitrary state $s\in S$ and let $\diamond$ be the partial belief change function based on $K_s$  defined by ($\ref{RI}$). Let $\phi\in\Phi_0$ be such that $\phi\in K_s$, that is, $\mathcal B(s)\subseteq \Vert\phi\Vert$ (thus, by seriality of $\mathcal B$, $\Vert\phi\Vert\ne\varnothing$). We need to show that, for every formula $\psi\in\Phi_0$, $\psi\in K\diamond\phi$ if and only if $\psi\in K$, that is, $\bigcup\limits_{s'\in\mathcal B(s)}{f(s',\Vert\phi\Vert)}\subseteq\Vert\psi\Vert$ if and only if  $\mathcal B(s)\subseteq\Vert\psi\Vert$ . This is an immediate consequence of $(P\diamond 2)$ with $E=\Vert\phi\Vert)$. \smallskip 
\par\noindent
 (B) Conversely, fix a frame that violates Property $(P\diamond 2)$. Then there exist $s\in S$  and $E\in 2^S$  such that $\mathcal B(s)\subseteq E$ (thus, by seriality of $\mathcal B$, $E\ne\varnothing$) but $\bigcup\limits_{s'\in\mathcal B(s)} {f(s',E)}\ne\mathcal B(s)$. Two cases are possible.\\[5pt]
CASE 1: $\bigcup\limits_{s'\in\mathcal B(s)}{f(s',E)}\not\subseteq\mathcal B(s)$. Let $p,q\in\texttt{At}$ be atomic formulas and construct a model where $\Vert p\Vert=E$ and $\Vert q\Vert=\mathcal B(s)$. Then, since $\mathcal B(s)\subseteq E=\Vert p\Vert$, $p\in K_s$ and, since $\mathcal B(s)\subseteq\Vert q\Vert$,  $q\in K_s$ but, since  $\bigcup\limits_{s'\in\mathcal B(s)}{f(s',\Vert p\Vert)}\not\subseteq\Vert q\Vert$, $q\notin K_s\diamond p$. Hence $K_s\diamond p\ne K_s$.\\[5pt]
CASE 2: $\mathcal B(s)\not\subseteq\bigcup\limits_{s'\in\mathcal B(s)} {f(s',\Vert\ p \Vert)}$. Let $p,q\in\texttt{At}$ be atomic formulas and construct a model where $\Vert p\Vert=E$ and $\Vert q\Vert=\bigcup\limits_{s'\in\mathcal B(s)} {f(s',E)}$. Then, $p\in K_s$ and $q\in K_s\diamond p$ but $q\notin K_s$, so that $K_s\diamond p\ne K_s$.\medskip
\end{proof}
\begin{proposition}
Frame property
\[(P\diamond6w)\quad\begin{array}{l}\forall s\in S,\forall E,F\in 2^S\text{ with }E\cap F\ne\varnothing\\[8pt]
 \text{if } \bigcup\limits_{s'\in\mathcal{B}(s)} {f(s',E)}\subseteq F\text{ and}
 \bigcup\limits_{s'\in\mathcal B(s)} {f(s',F)}\subseteq E\\[16pt]
 \text{then }\bigcup\limits_{s'\in\mathcal{B}(s)} {f(s',E)}\,\,=\,\bigcup\limits_{s'\in\mathcal{B}(s)} {f(s',F)}
 \end{array}\]
characterizes the following KM axiom:
\[(K\diamond6w)\quad \text{If }\psi\in K\diamond\phi\text{ and }\phi\in K\diamond\psi\text{ and }\top\in K\diamond(\phi\wedge\psi)\text{ then } K\diamond\phi=K\diamond\psi\]
\end{proposition}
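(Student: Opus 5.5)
The proof will follow the two-part pattern of the proof for $(K\diamond2)$: soundness on frames satisfying $(P\diamond6w)$, then a countermodel on any frame violating it. Throughout, write $U(E)=\bigcup_{s'\in\mathcal B(s)}f(s',E)$ for nonempty $E$.

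\emph{(A) Soundness.} Fix a model based on a frame satisfying $(P\diamond6w)$, a state $s$, and the partial function $\diamond$ on $K_s$ defined by (\ref{RI}). Assume $\psi\in K_s\diamond\phi$, $\phi\in K_s\diamond\psi$ and $\top\in K_s\diamond(\phi\wedge\psi)$.

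First, the three hypotheses place $\phi$, $\psi$ and $\phi\wedge\psi$ in the domain $\Psi_M$. The last one yields $\Vert\phi\wedge\psi\Vert=\Vert\phi\Vert\cap\Vert\psi\Vert\ne\varnothing$ by Remark \ref{REM:topinKdiamond}; this is exactly the role of the clause $\top\in K\diamond(\phi\wedge\psi)$.

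Set $E=\Vert\phi\Vert$ and $F=\Vert\psi\Vert$. The hypotheses translate to $U(E)\subseteq F$ and $U(F)\subseteq E$, and we have $E\cap F\ne\varnothing$. So $(P\diamond6w)$ gives $U(E)=U(F)$. Then for every $\chi$ we get $\chi\in K_s\diamond\phi$ iff $U(E)\subseteq\Vert\chi\Vert$ iff $U(F)\subseteq\Vert\chi\Vert$ iff $\chi\in K_s\diamond\psi$.

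\emph{(B) Converse.} Fix a frame violating $(P\diamond6w)$. Then there are $s$ and $E,F$ with $E\cap F\ne\varnothing$, $U(E)\subseteq F$ and $U(F)\subseteq E$, but $U(E)\ne U(F)$. Take distinct atoms $p,q,r$ and a valuation with $\Vert p\Vert=E$ and $\Vert q\Vert=F$. Then $q\in K_s\diamond p$ and $p\in K_s\diamond q$, and $\top\in K_s\diamond(p\wedge q)$ because $\Vert p\wedge q\Vert=E\cap F\ne\varnothing$ (Remark \ref{REM:topinKdiamond}).

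Split into cases according to which inclusion fails, mirroring the $(K\diamond2)$ proof:
\begin{itemize}
\item If $U(E)\not\subseteq U(F)$, set $\Vert r\Vert=U(F)$. Then $r\in K_s\diamond q$ but $r\notin K_s\diamond p$.
\item Otherwise $U(F)\not\subseteq U(E)$; set $\Vert r\Vert=U(E)$. Then $r\in K_s\diamond p$ but $r\notin K_s\diamond q$.
\end{itemize}
In either case $K_s\diamond p\ne K_s\diamond q$, so the axiom fails at $s$.

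The only delicate point is bookkeeping about domains. In (A) we need $\Vert\phi\Vert,\Vert\psi\Vert\ne\varnothing$ so that $f$ is defined; membership in the domain secures this, and in (B) $E,F\supseteq E\cap F\ne\varnothing$ does the same. We also need the valuation of $r$ to be an arbitrary subset of $S$, which is allowed since $V$ is unrestricted. Nothing else should pose difficulty.
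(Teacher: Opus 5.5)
Your proposal is correct and follows the paper's proof essentially step for step. Soundness instantiates $(P\diamond6w)$ with $E=\Vert\phi\Vert$ and $F=\Vert\psi\Vert$, using the domain condition and Remark \ref{REM:topinKdiamond} to get $E\cap F\ne\varnothing$. The converse uses the same two-case countermodel, with $\Vert p\Vert=E$, $\Vert q\Vert=F$, and $\Vert r\Vert$ equal to $U(F)$ in the first case or $U(E)$ in the second.
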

\begin{proof}
Fix a frame that satisfies property $(P\diamond6w)$, an arbitrary model $M$ based on it and an arbitrary state $s\in S$ and let $\diamond$ be the belief change function defined by \eqref{RI}.  Let $\phi,\psi,(\phi\wedge\psi)\in\Phi_0$ be in the domain of $\diamond$ (hence  $\Vert\phi\wedge\psi\Vert\ne\varnothing$ and thus $\Vert\phi\Vert\ne\varnothing$ and $\Vert\psi\Vert\ne\varnothing$) and suppose that $\psi\in K_s\diamond\phi$ and $\phi\in K_s\diamond\psi$ (note that, by Remark \ref{REM:topinKdiamond}, $\top\in K\diamond(\phi\wedge\psi)$ ). Then, $\bigcup\limits_{s'\in\mathcal B(s)} {f(s',\Vert\phi\Vert)}\subseteq\Vert\psi\Vert$ and $\bigcup\limits_{s'\in\mathcal B(s)} {f(s',\Vert\psi\Vert)}\subseteq\Vert\phi\Vert$ and thus, by Property $(P\diamond6w)$ (with $E=\Vert \phi\Vert$ and $F=\Vert\psi\Vert$; note that $\Vert\phi\Vert\cap\Vert\psi\Vert=\Vert\phi\wedge\psi\Vert\ne\varnothing$)
\begin{equation}
\label{EQ:due}
\bigcup\limits_{s'\in\mathcal B(s)} {f(s',\Vert\phi\Vert)} =\bigcup\limits_{s'\in\mathcal B(s)} {f(s',\Vert\psi\Vert)}
\end{equation}
It follows from \eqref{EQ:due} that,  for every $\chi\in\Phi_0$, $\chi\in K_s\diamond\phi$ if and only if $\chi\in K_s\diamond\psi$, that is, $K_s\diamond\phi=K_s\diamond\psi$\\[3pt]
Conversely, fix a frame that violates property $(P\diamond6w)$. Then there exist $s\in S$ and $E,F\in 2^S$ such that $E\cap F\ne\varnothing$,  $\bigcup\limits_{s'\in\mathcal B(s)} {f(s',E)}\subseteq F$ and $\bigcup\limits_{s'\in\mathcal B(s)} {f(s',F)}\subseteq E$ but 
$\bigcup\limits_{s'\in\mathcal B(s)} {f(s',E)}\ne \bigcup\limits_{s'\in\mathcal B(s)} {f(s',F)}$
Two cases are possible:\\[5pt]
CASE 1: $\bigcup\limits_{s'\in\mathcal B(s)} {f(s',E)} \,\nsubseteq\,\bigcup\limits_{s'\in\mathcal B(s)} {f(s',F)}$.\\[5pt]
CASE 2: $\bigcup\limits_{s'\in\mathcal B(s)} {f(s',F)} \,\nsubseteq\,\bigcup\limits_{s'\in\mathcal B(s)} {f(s',E)}$.\\[5pt]
In Case 1, let $p,q,r\in\texttt{At}$ be atomic sentences and construct a model based on this frame where $\Vert p\Vert=E$, $\Vert q\Vert=F$ and $\Vert r\Vert=\bigcup\limits_{s'\in\mathcal B(s)} {f(s',F)}$. Then, since $\bigcup\limits_{s'\in\mathcal B(s)} {f(s',\Vert p\Vert)}\subseteq \Vert q\Vert$, $q\in K_s\diamond p$ and, since $\bigcup\limits_{s'\in\mathcal B(s)} {f(s',\Vert q\Vert)}\subseteq \Vert p\Vert$, $p\in K_s\diamond q$ and, since $\Vert p\wedge q\Vert\ne\varnothing$, $\top\in K\diamond(p\wedge q)$ (see Remark \ref{REM:topinKdiamond}). Furthermore, $r\in K_s\diamond q$, but, since 
$\bigcup\limits_{s'\in\mathcal B(s)} {f(s',\Vert p\Vert))} \,\nsubseteq\,\Vert r\Vert$, $r\notin K_s\diamond p$. Thus $K_s\diamond p\ne K_s\diamond q$.\\[5pt]
In Case 2, let $p,q,r\in\texttt{At}$ be atomic sentences and construct a model based on this frame where $\Vert p\Vert=E$, $\Vert q\Vert=F$ and $\Vert r\Vert=\bigcup\limits_{s'\in\mathcal B(s)} {f(s',E)}$. Then, $q\in K_s\diamond p$ and $p\in K_s\diamond q$ and $r\in K_s\diamond p$ and (since $\Vert p\wedge q\Vert\ne\varnothing$) $\top\in K\diamond(p\wedge q)$ (see Remark \ref{REM:topinKdiamond}) but, since 
$\bigcup\limits_{s'\in\mathcal B(s)} {f(s',\Vert q\Vert))} \,\nsubseteq\,\Vert r\Vert$, $r\notin K_s\diamond q$. Thus $K_s\diamond p\ne K_s\diamond q$.\medskip
\end{proof}
\begin{proposition}
Frame property
\[(P\diamond7s)\quad\begin{array}{l}
\forall s\in S,\forall E,F\in 2^S\setminus\varnothing\\[4pt]
\bigcup\limits_{s'\in\mathcal B(s)} {f(s',E\cup F)}\,
 \subseteq\, \left(\bigcup\limits_{s'\in\mathcal B(s)} {f(s',E)}\right)\,\,\cup \,\left(\bigcup\limits_{s'\in\mathcal B(s)} {f(s',F)}\right)
 \end{array}\]
characterizes the following KM axiom:
\[(K\diamond7s)\qquad (K\diamond\phi)\cap(K\diamond\psi)\subseteq K\diamond(\phi\vee\psi)\]
\end{proposition}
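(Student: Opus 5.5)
The proof follows the same two-part pattern as the preceding propositions: first soundness, then a countermodel built on any frame that violates the property. Throughout we abbreviate $U(E)=\bigcup_{s'\in\mathcal B(s)}f(s',E)$.

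\emph{Soundness.} Fix a frame satisfying $(P\diamond7s)$, a model $M$ based on it, and a state $s$. Let $\diamond$ be the partial belief change function defined by \eqref{RI}. Take $\phi,\psi$ in its domain, so $\Vert\phi\Vert\ne\varnothing$ and $\Vert\psi\Vert\ne\varnothing$. Then $\Vert\phi\vee\psi\Vert=\Vert\phi\Vert\cup\Vert\psi\Vert\ne\varnothing$, so $\phi\vee\psi$ is also in the domain. Let $\chi\in(K_s\diamond\phi)\cap(K_s\diamond\psi)$. By \eqref{RI}, $U(\Vert\phi\Vert)\subseteq\Vert\chi\Vert$ and $U(\Vert\psi\Vert)\subseteq\Vert\chi\Vert$. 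Applying $(P\diamond7s)$ with $E=\Vert\phi\Vert$ and $F=\Vert\psi\Vert$ gives
\[U(\Vert\phi\vee\psi\Vert)\subseteq U(\Vert\phi\Vert)\cup U(\Vert\psi\Vert)\subseteq\Vert\chi\Vert,\]
so $\chi\in K_s\diamond(\phi\vee\psi)$.

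\emph{Countermodel.} Suppose a frame violates $(P\diamond7s)$. Then there are $s\in S$ and nonempty $E,F$ such that some state $t\in U(E\cup F)$ lies outside $U(E)\cup U(F)$. Choose atoms $p,q,r$ and a valuation with
\[\Vert p\Vert=E,\qquad \Vert q\Vert=F,\qquad \Vert r\Vert=U(E)\cup U(F).\]
Since $E$ and $F$ are nonempty, $p$, $q$ and $p\vee q$ are all in the domain of $\diamond$. We have $U(\Vert p\Vert)\subseteq\Vert r\Vert$ and $U(\Vert q\Vert)\subseteq\Vert r\Vert$, hence $r\in(K_s\diamond p)\cap(K_s\diamond q)$. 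On the other hand $\Vert p\vee q\Vert=E\cup F$ and $t\in U(E\cup F)\setminus\Vert r\Vert$, so $r\notin K_s\diamond(p\vee q)$. Thus $(K\diamond7s)$ fails at $s$.

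The only delicate point is the domain bookkeeping: $\diamond$ is a partial function, and the axiom is required only for formulas in its domain. This is why $(P\diamond7s)$ quantifies only over nonempty $E,F$. It also means the countermodel must use nonempty truth sets, which holds automatically here because $E$ and $F$ are nonempty. Apart from this, the argument is a direct unpacking of \eqref{RI}.
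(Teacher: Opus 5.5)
Your proposal is correct and follows essentially the same route as the paper: soundness is proved by instantiating $(P\diamond7s)$ with $E=\Vert\phi\Vert$ and $F=\Vert\psi\Vert$, and the countermodel sets $\Vert p\Vert=E$, $\Vert q\Vert=F$, $\Vert r\Vert=U(E)\cup U(F)$. Your explicit check that $\phi\vee\psi$ (respectively $p\vee q$) lies in the domain of $\diamond$ is a small addition that the paper leaves implicit.
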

\begin{proof}
Fix a frame that satisfies property $(P\diamond 7s)$, an arbitrary model based on it, an arbitrary state $s\in S$ and let $\diamond$ be the partial belief change function based on $K_s$  defined by ($\ref{RI}$). Let $\phi,\psi\in\Phi_0$ be in the domain of $\diamond$ (thus $\Vert\phi\Vert\ne\varnothing$ and $\Vert\psi\Vert\ne\varnothing$) and fix an arbitrary $\chi\in (K_s\diamond\phi)\cap (K_s\diamond\psi)$. Then, by  ($\ref{RI}$),   $\bigcup\limits_{s'\in\mathcal B(s)} {f(s',\Vert\phi\Vert)}\subseteq\Vert\chi\Vert$ and $\bigcup\limits_{s'\in\mathcal B(s)} {f(s',\Vert\psi\Vert)}\subseteq\Vert\chi\Vert$ and thus, by Property $(P\diamond 7s)$ (with $E=\Vert\phi\Vert$ and $F=\Vert\psi\Vert$) and the fact that $\Vert\phi\Vert\cup\Vert\psi\Vert=\Vert\phi\vee\psi\Vert$,
$\bigcup\limits_{s'\in\mathcal B(s)} {f(s',\Vert\phi\vee\psi\Vert)}\subseteq\Vert\chi\Vert$, that is, by ($\ref{RI}$), $\chi\in K_s\diamond(\phi\vee\psi)$.\\[3pt]
Conversely, fix a frame that violates property $(P\diamond 7s)$. Then there exist $s\in S$ and $E,F\in 2^S\setminus\varnothing$ such that
\begin{equation}
\label{fornegPKM7s}
\bigcup\limits_{s'\in\mathcal B(s)} {f(s',E\cup F)}\,\nsubseteq \, \bigcup\limits_{s'\in\mathcal B(s)} {f(s',E)}\,\,\cup \,\,\bigcup\limits_{s'\in\mathcal B(s)} {f(s',F)}
\end{equation}
Let $p,q,r\in\texttt{At}$ be atomic formulas and construct a model where $\Vert p\Vert=E$, $\Vert q\Vert=F$ and $\Vert r\Vert=\bigcup\limits_{s'\in\mathcal B(s)} {f(s',E)}\,\,\cup \,\,\bigcup\limits_{s'\in\mathcal B(s)} {f(s',F)}$. Then, by  \eqref{fornegPKM7s} (since $\Vert p\Vert\cup\Vert q\Vert=\Vert p\vee q\Vert$), $\bigcup\limits_{s'\in\mathcal B(s)}{f(s',\Vert p\vee q\Vert)}\nsubseteq\Vert r\Vert$ and thus, by ($\ref{RI}$), $r\notin K_s\diamond(p\vee q)$. On the other hand, since $\bigcup\limits_{s'\in\mathcal B(s)}{f(s',\Vert p\Vert)}\subseteq \Vert r\Vert$ and $\bigcup\limits_{s'\in\mathcal B(s)}{f(s',\Vert q\Vert)}\subseteq \Vert r\Vert$, $r\in K\diamond p$ and $r\in K\diamond q$, yielding a violation of axiom $(K\diamond 7s)$.
\end{proof}
\vspace{8pt}\noindent
\textbf{Proofs of the syntactic characterizations listed in Figure \ref{Fig_2}}.
\par\noindent
The characterizations of  $(A^{*2}_{\diamond1})$,  $(A^{*5b}_{\diamond3b})$ and  $(A^{*7}_{\diamond5})$ are proved in \cite{Bon25b}.\footnote{Since $(K\diamond1)$ coincides with AGM axiom $(K\ast2)$, $(K\diamond3b)$ coincides with AGM axiom $(K\ast5b)$ and $(K\diamond5)$ coincides with AGM axiom $(K\ast7)$.} 
\par\noindent
For $(A\diamond2)$, $(A\diamond6w)$ and $(A\diamond7s)$ the proofs are given in the following three propositions.
\begin{proposition}
The modal formula 
\begin{equation*}
(A\diamond2)\qquad B\phi\,\rightarrow\,\big(B\psi)\leftrightarrow B(\phi>\psi)\big)  
\end{equation*}
is characterized by the following property of frames:
\begin{equation*}
\arraycolsep=10pt\begin{array}{ll}(P\diamond2)&\forall s\in S, \forall E\in 2^S\setminus\varnothing,\text{ if }\mathcal B(s)\subseteq E,\text{ then }\bigcup\limits_{s'\in\mathcal B(s)}{f(s',E)}=\mathcal B(s)
\end{array} 
\end{equation*}
\end{proposition}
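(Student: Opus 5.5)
The proof has two halves: validity of $(A\diamond2)$ on frames satisfying $(P\diamond2)$, and a countermodel on any frame violating it. Both halves mirror the proof of the corresponding proposition for $(K\diamond2)$.

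\emph{Validity.} Fix a frame satisfying $(P\diamond2)$, a model $M$ on it, and a state $s$ with $s\models_M B\phi$, so that $\mathcal B(s)\subseteq\Vert\phi\Vert$. By seriality $\mathcal B(s)\ne\varnothing$, hence $\Vert\phi\Vert\ne\varnothing$. Therefore, by Definition~\ref{TruthModal}, for every $s'$ we have $s'\models_M\phi>\psi$ if and only if $f(s',\Vert\phi\Vert)\subseteq\Vert\psi\Vert$; clause (a) is excluded because $\Vert\phi\Vert\neq\varnothing$. It follows that
\[
s\models_M B(\phi>\psi)\iff \bigcup_{s'\in\mathcal B(s)}f(s',\Vert\phi\Vert)\subseteq\Vert\psi\Vert .
\]
Applying $(P\diamond2)$ with $E=\Vert\phi\Vert$, the union equals $\mathcal B(s)$. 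The right-hand side is then exactly $\mathcal B(s)\subseteq\Vert\psi\Vert$, that is, $s\models_M B\psi$, and the biconditional holds.

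\emph{Failure.} Suppose the frame violates $(P\diamond2)$. Then there are $s$ and $E$ with $\mathcal B(s)\subseteq E$ (so $E\ne\varnothing$) but $U:=\bigcup_{s'\in\mathcal B(s)}f(s',E)\ne\mathcal B(s)$. I would split into the same two cases as before, using atoms $p,q$ and setting $\Vert p\Vert=E$ in both.

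In Case 1, $U\not\subseteq\mathcal B(s)$. Set $\Vert q\Vert=\mathcal B(s)$. Then $s\models Bp$ and $s\models Bq$, but some $s'\in\mathcal B(s)$ has $f(s',E)\not\subseteq\Vert q\Vert$. So $s'\not\models p>q$, hence $s\not\models B(p>q)$, and the biconditional fails.

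In Case 2, $\mathcal B(s)\not\subseteq U$. Set $\Vert q\Vert=U$. Then every $s'\in\mathcal B(s)$ satisfies $p>q$, so $s\models B(p>q)$, while $s\not\models Bq$. Again $(A\diamond2)$ fails at $s$.

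There is no real obstacle. The only point needing care is that $\Vert\phi\Vert\ne\varnothing$ follows from $B\phi$ via seriality. This is what lets us ignore clause (a) of the truth condition for $>$ and identify $B(\phi>\psi)$ with the inclusion of the union of the selected sets.
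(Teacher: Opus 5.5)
Your proof is correct and follows the paper's argument essentially step for step. Validity comes from applying $(P\diamond2)$ with $E=\Vert\phi\Vert$, which is nonempty by seriality, and failure comes from the same two-case countermodel with $\Vert p\Vert=E$ and $\Vert q\Vert$ equal to $\mathcal B(s)$ or to the union of the selected sets. Your explicit remark that clause (a) of the truth condition for $>$ is excluded because $\Vert\phi\Vert\neq\varnothing$ (and $E\neq\varnothing$ in the countermodels) is something the paper leaves implicit.
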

\begin{proof}
(A) Fix an arbitrary model based on a frame that satisfies Property $(P\diamond 2)$, an arbitrary state $s\in S$ and an arbitrary formula $\phi\in\Phi_0$ and suppose that $s\models B\phi$, that is, $\mathcal B(s)\subseteq \Vert\phi\Vert$ (thus, by seriality of $\mathcal B$, $\Vert\phi\Vert\ne\varnothing$). We need to show that, for every formula $\psi\in\Phi_0$, $s\models B\psi \leftrightarrow B(\phi>\psi)$, that is, $s\models B\psi$ if and only if $s\models B(\phi>\psi)$. 
By Property $(P\diamond 2)$ (with $E=\Vert\phi\Vert$), $\bigcup\limits_{s'\in\mathcal B(s)}{f(s',\Vert\phi\Vert)}=\mathcal B(s)$ and thus,  $\mathcal B(s)\subseteq \Vert\psi\Vert$ (that is, $s\models B\psi$) if and only if  
$\bigcup\limits_{s'\in\mathcal B(s)}{f(s',\Vert\phi\Vert)}\subseteq\Vert\psi\Vert$ (that is $s\models B(\phi>\psi)$).\smallskip 
\par\noindent
(B) Fix a frame that violates property $(P\diamond2)$. Then there exist a state $s\in S$ and an event $E\subseteq S$ such that (a) $\mathcal B(s)\subseteq E$, (b) $\bigcup\limits_{s'\in\mathcal B(s)} {f(s',E)}\ne \mathcal B(s)$. Two cases are possible.\\[5pt]
CASE 1: $\bigcup\limits_{s'\in\mathcal B(s)}{f(s',E)}\not\subseteq\mathcal B(s)$. Let $p,q\in\texttt{At}$ be atomic formulas and construct a model where $\Vert p\Vert=E$ and $\Vert q\Vert=\mathcal B(s)$. Then $s\models Bp$ and $s\models Bq$ but, since  $\bigcup\limits_{s'\in\mathcal B(s)}{f(s',\Vert p\Vert)}\not\subseteq\Vert q\Vert$, $s\not\models B(p>q)$ so that $s\not\models (Bq\leftrightarrow B(p>q)$. Hence $s\not\models Bp\rightarrow\left(Bq\leftrightarrow B(p>q)\right)$.\\[5pt]
CASE 2: $\mathcal B(s)\not\subseteq\bigcup\limits_{s'\in\mathcal B(s)} {f(s',E)}$. Let $p,q\in\texttt{At}$ be atomic formulas and construct a model where $\Vert p\Vert=E$ and $\Vert q\Vert=\bigcup\limits_{s'\in\mathcal B(s)} {f(s',E)}=\bigcup\limits_{s'\in\mathcal B(s)} {f(s',\Vert p\Vert)}$. Then, $s\models Bp$ and $s\models B(p>q)$ but, since $\mathcal B(s)\not\subseteq \Vert q\Vert$, $s\not\models q$ so that $s\not\models Bp\rightarrow\left(Bq\leftrightarrow B(p>q)\right)$.
\end{proof}
\begin{proposition}
The modal formula 
\begin{equation*}
\arraycolsep=10pt
\begin{array}{ll}
(A\diamond6w)& \neg\square\neg(\phi \wedge\psi)\wedge B(\phi>\psi)\wedge B(\psi>\phi)\\[10pt]
{}&\rightarrow\left(B(\phi>\chi)\leftrightarrow B(\psi>\chi)\right) 
\end{array} 
\end{equation*}
is characterized by the following property of frames:
\begin{equation*}
\arraycolsep=10pt\
\begin{array}{ll}
 (P\diamond 6w)&\forall s\in S,\forall E,F\in 2^S\text{ with }E\cap F\ne\varnothing,\,\text{ if } \bigcup\limits_{s'\in\mathcal B(s)} {f(s',E)} \subseteq F \\[12pt]
 {}&\text{and }\bigcup\limits_{s'\in\mathcal B(s)} {f(s',F)}\subseteq E\, \text{ then } \bigcup\limits_{s'\in\mathcal B(s)} {f(s',E)} =\bigcup\limits_{s'\in\mathcal B(s)} {f(s',F)}
 \end{array} 
\end{equation*}
\end{proposition}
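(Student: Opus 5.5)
The proof splits into the two directions of Definition \ref{DEF:MODALcorrespondence}, following the pattern of the preceding proposition for $(A\diamond2)$ and of the semantic characterization of $(K\diamond6w)$.

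\emph{Validity.} Fix a frame satisfying $(P\diamond6w)$, a model $M$ on it, a state $s$, and Boolean $\phi,\psi,\chi$. Suppose $s\models\neg\square\neg(\phi\wedge\psi)\wedge B(\phi>\psi)\wedge B(\psi>\phi)$. By clause 4 of Definition \ref{TruthModal}, $\Vert\phi\Vert\cap\Vert\psi\Vert=\Vert\phi\wedge\psi\Vert\ne\varnothing$, so $E=\Vert\phi\Vert$ and $F=\Vert\psi\Vert$ are nonempty. Clause 5(b) therefore applies at every $s'\in\mathcal B(s)$. From $s\models B(\phi>\psi)$ we get $\bigcup_{s'\in\mathcal B(s)}f(s',E)\subseteq F$, and symmetrically $\bigcup_{s'\in\mathcal B(s)}f(s',F)\subseteq E$. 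Property $(P\diamond6w)$ then gives equality of the two unions. Since $s\models B(\phi>\chi)$ iff $\bigcup_{s'\in\mathcal B(s)}f(s',E)\subseteq\Vert\chi\Vert$, and likewise for $\psi$, the biconditional follows.

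\emph{Failure off the class.} Fix a frame violating $(P\diamond6w)$. There are $s$ and $E,F$ with $E\cap F\ne\varnothing$ such that each union is contained in the other event, but the two unions differ. Take atoms $p,q,r$ and split into the same two cases as in the semantic proof.
\begin{itemize}
\item Case 1: the $E$-union is not contained in the $F$-union. Set $\Vert p\Vert=E$, $\Vert q\Vert=F$, $\Vert r\Vert=\bigcup_{s'\in\mathcal B(s)}f(s',F)$. Then $s\models\neg\square\neg(p\wedge q)$, $s\models B(p>q)$, $s\models B(q>p)$ and $s\models B(q>r)$, but $s\not\models B(p>r)$.
\item Case 2: symmetric, with $\Vert r\Vert$ equal to the $E$-union.
\end{itemize}
In either case the biconditional fails at $s$, so $(A\diamond6w)$ is not valid on the frame.

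The only delicate point is that in the countermodel the conditionals must be evaluated by clause 5(b) rather than vacuously through 5(a). This holds because $E$ and $F$ are nonempty, which follows from $E\cap F\ne\varnothing$. No real obstacle is expected.
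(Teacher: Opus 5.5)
Your proposal is correct and matches the paper's proof essentially step for step. For validity you instantiate $(P\diamond6w)$ with $E=\Vert\phi\Vert$, $F=\Vert\psi\Vert$. For the converse you use the same two-case countermodel, setting $\Vert r\Vert$ to the $F$-union in Case 1 and to the $E$-union in Case 2, and your remark that $E\cap F\ne\varnothing$ forces clause 5(b) is the right justification.
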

\begin{proof}
(A) Fix an arbitrary model based on a frame that satisfies Property $(P\diamond 6w)$, an arbitrary state $s\in S$ and arbitrary formulas $\phi,\psi,\chi\in\Phi_0$ and suppose that $s\models \neg\square\neg(\phi \wedge\psi)\wedge B(\phi>\psi)\wedge B(\psi>\phi)$, that is, $\Vert\phi\wedge\psi\Vert\ne\varnothing$ (and thus $\Vert\phi\Vert\ne\varnothing$ and $\Vert\psi\Vert\ne\varnothing$), $\bigcup\limits_{s'\in\mathcal B(s)}{f(s',\Vert\phi\Vert)}\subseteq\Vert\psi\Vert$ and $\bigcup\limits_{s'\in\mathcal B(s)}{f(s',\Vert\psi\Vert)}\subseteq\Vert\phi\Vert$. We need to show that $s\models B(\phi>\chi)\leftrightarrow B(\psi>\chi)$. By Property $(P\diamond 6w)$ (with $E=\Vert\phi\Vert$ and $F=\Vert\psi\Vert$ so that $E\cap F=\Vert\phi\Vert\cap\Vert\psi\Vert=\Vert\phi\wedge\psi\Vert\ne\varnothing$), 
\begin{equation}
 \label{U=U}
 \bigcup\limits_{s'\in\mathcal B(s)} {f(s',\Vert\phi\Vert)} =\bigcup\limits_{s'\in\mathcal B(s)} {f(s',\Vert\psi\Vert)}
  \end{equation}
  Suppose that $s\models B(\phi>\chi)$. Then $\bigcup\limits_{s'\in\mathcal B(s)} {f(s',\Vert\phi\Vert)}\subseteq \Vert\chi\Vert$ and thus, by \eqref{U=U}, $\bigcup\limits_{s'\in\mathcal B(s)} {f(s',\Vert\psi\Vert)}\subseteq \Vert\chi\Vert$, that is, $s\models B(\psi>\chi)$. Conversely, suppose that $s\models B(\psi>\chi)$. Then $\bigcup\limits_{s'\in\mathcal B(s)} {f(s',\Vert\psi\Vert)}\subseteq \Vert\chi\Vert$ and thus, by \eqref{U=U}, $\bigcup\limits_{s'\in\mathcal B(s)} {f(s',\Vert\phi\Vert)}\subseteq \Vert\chi\Vert$, that is, $s\models B(\phi>\chi)$. Thus, $s\models B(\phi>\chi)\leftrightarrow B(\psi>\chi)$.\\[3pt]
  (B) Conversely, fix a frame that violates Property $(P\diamond 6w)$. Then there exist $s\in S$, $E,F\in 2^S\text{ with }E\cap F\ne\varnothing$ such that $\bigcup\limits_{s'\in\mathcal B(s)} {f(s',E)}\subseteq F$ and $\bigcup\limits_{s'\in\mathcal B(s)} {f(s',F)}\subseteq E$ but  $\bigcup\limits_{s'\in\mathcal B(s)} {f(s',E)}\ne \bigcup\limits_{s'\in\mathcal B(s)} {f(s',F)}$.  One of the following two cases must hold.\\[4pt]
  Case 1: $\bigcup\limits_{s'\in\mathcal B(s)} {f(s',E)}\not\subseteq \bigcup\limits_{s'\in\mathcal B(s)} {f(s',F)}$, or\\[4pt]
  Case 2: $\bigcup\limits_{s'\in\mathcal B(s)} {f(s',F)}\not\subseteq \bigcup\limits_{s'\in\mathcal B(s)} {f(s',E)}$.\\[4pt]
  In Case 1 let $p$, $q$ and $r$ be atomic formulas and construct a model where $\Vert p\Vert=E$, $\Vert q \Vert=F$ and $\Vert r\Vert= \bigcup\limits_{s'\in\mathcal B(s)} {f(s',F)}$. Since $\varnothing\ne\Vert p\Vert\cap\Vert q\Vert=\Vert p\wedge q\Vert$, $s\models \neg\square\neg (p\wedge q)$; furthermore, $s\models B(p>q)\wedge B(q>p)\wedge B(q>r)$ but $s\not\models B(p>r)$.\\[4pt]
  In Case 2 construct a model where $\Vert p\Vert=E$, $\Vert q \Vert=F$ and $\Vert r\Vert= \bigcup\limits_{s'\in\mathcal B(s)} {f(s',E)}$. Again, since $\varnothing\ne\Vert p\Vert\cap\Vert q\Vert=\Vert p\wedge q\Vert$, $s\models \neg\square\neg (p\wedge q)$; furthermore, $s\models B(p>q)\wedge B(q>p)\wedge B(p>r)$ but $s\not\models B(q>r)$.
\end{proof}
\par\noindent
\begin{proposition}
The modal formula
\begin{equation*}
\arraycolsep=10pt
\begin{array}{ll}
(A\diamond7s)& \neg\square\neg\phi \wedge\neg\square\neg\psi\wedge B(\phi>\chi)\wedge B(\psi>\chi)\\[10pt]
{}&\rightarrow B\left((\phi\vee\psi)>\chi\right) 
\end{array} 
\end{equation*}
is characterized by the following property of frames:
\begin{equation*}
\begin{array}{ll}(P\diamond 7s)&
\forall s\in S,\forall E,F\in 2^S\setminus\varnothing\\[8pt]
{}&\bigcup\limits_{s'\in\mathcal B(s)} {f(s',E\cup F)}\, \subseteq\, \left(\bigcup\limits_{s'\in\mathcal B(s)} {f(s',E)}\right)\,\,\cup \,\left(\bigcup\limits_{s'\in\mathcal B(s)} {f(s',F)}\right)
\end{array}
\end{equation*}
\end{proposition}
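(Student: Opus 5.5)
We argue the two directions of Definition \ref{DEF:MODALcorrespondence} separately, following the template of the proofs just given for $(A\diamond2)$ and $(A\diamond6w)$. Throughout we use Definition \ref{TruthModal}. Since $\Vert\phi\Vert\neq\varnothing$, the formula $B(\phi>\chi)$ holds at $s$ iff $\bigcup_{s'\in\mathcal B(s)}f(s',\Vert\phi\Vert)\subseteq\Vert\chi\Vert$. We also use $\Vert\phi\vee\psi\Vert=\Vert\phi\Vert\cup\Vert\psi\Vert$.

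\emph{(A) Validity.} Fix a model on a frame satisfying $(P\diamond7s)$, a state $s$, and Boolean $\phi,\psi,\chi$ with
\[
s\models\neg\square\neg\phi\wedge\neg\square\neg\psi\wedge B(\phi>\chi)\wedge B(\psi>\chi).
\]
The first two conjuncts give $\Vert\phi\Vert\neq\varnothing$ and $\Vert\psi\Vert\neq\varnothing$. Hence the selection sets are defined, and clause (b) of the truth condition for $>$ applies. The last two conjuncts then yield
\[
\bigcup_{s'\in\mathcal B(s)}f(s',\Vert\phi\Vert)\subseteq\Vert\chi\Vert
\quad\text{and}\quad
\bigcup_{s'\in\mathcal B(s)}f(s',\Vert\psi\Vert)\subseteq\Vert\chi\Vert .
\]
Apply $(P\diamond7s)$ with $E=\Vert\phi\Vert$ and $F=\Vert\psi\Vert$. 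This gives
\[
\bigcup_{s'\in\mathcal B(s)}f(s',\Vert\phi\vee\psi\Vert)\subseteq\Vert\chi\Vert .
\]
Since $\Vert\phi\vee\psi\Vert\neq\varnothing$, this inclusion says exactly that $s\models B((\phi\vee\psi)>\chi)$.

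\emph{(B) Failure off the class.} Suppose a frame violates $(P\diamond7s)$. Then there are $s$ and nonempty $E,F$ such that $\bigcup_{s'\in\mathcal B(s)}f(s',E\cup F)$ is not contained in
\[
G=\Big(\bigcup_{s'\in\mathcal B(s)}f(s',E)\Big)\cup\Big(\bigcup_{s'\in\mathcal B(s)}f(s',F)\Big).
\]
Take atoms $p,q,r$ and a valuation with $\Vert p\Vert=E$, $\Vert q\Vert=F$ and $\Vert r\Vert=G$. Because $E$ and $F$ are nonempty, $s\models\neg\square\neg p\wedge\neg\square\neg q$. Since each of the two unions lies inside $G$, we get $s\models B(p>r)\wedge B(q>r)$. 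Now $\Vert p\vee q\Vert=E\cup F$ is nonempty, so clause (b) applies to $p\vee q$. The violation then gives $s\not\models B((p\vee q)>r)$, so $(A\diamond7s)$ fails at $s$.

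Neither direction is hard; the proof mirrors the corresponding proposition for $(K\diamond7s)$. The only point needing care is the nonemptiness bookkeeping. The antecedent clauses $\neg\square\neg\phi$ and $\neg\square\neg\psi$ guarantee that the truth condition for $>$ is evaluated via clause (b) rather than the vacuous clause (a). This matters because $(P\diamond7s)$ quantifies only over nonempty $E,F$; without these clauses a vacuously true $B(\phi>\chi)$ could not be combined with the frame property.
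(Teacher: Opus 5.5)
Your proof is correct and follows the paper's argument essentially step for step. For validity you instantiate $(P\diamond7s)$ at $E=\Vert\phi\Vert$, $F=\Vert\psi\Vert$ and use $\Vert\phi\vee\psi\Vert=\Vert\phi\Vert\cup\Vert\psi\Vert$, and for the converse you use the same countermodel, $\Vert p\Vert=E$, $\Vert q\Vert=F$, with $\Vert r\Vert$ the union of the two selection sets.

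Your closing remark overstates the role of the $\neg\square\neg$ clauses, though this does not affect the proof. If $\Vert\phi\Vert=\varnothing$, then $\Vert\phi\vee\psi\Vert=\Vert\psi\Vert$, so $B((\phi\vee\psi)>\chi)$ follows directly from $B(\psi>\chi)$. If both truth sets are empty, it holds vacuously. So the formula would remain valid on these frames even without those clauses.
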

\begin{proof}
Fix a frame that satisfies property $(P\diamond 7s)$, an arbitrary model based on it, an arbitrary state $s\in S$ and arbitrary $\phi,\psi,\chi\in\Phi_0$ and assume that $s\models \neg\square\neg\phi \wedge\neg\square\neg\psi\wedge B(\phi>\chi)\wedge B(\psi>\chi)$. Then $\Vert\phi\Vert\ne\varnothing$, $\Vert\psi\Vert\ne\varnothing$, $\bigcup\limits_{s'\in\mathcal B(s)}{f(s',\Vert \phi\Vert)}\subseteq \Vert \chi\Vert$ and $\bigcup\limits_{s'\in\mathcal B(s)}{f(s',\Vert \psi\Vert)}\subseteq \Vert \chi\Vert$. Thus, by property $(P\diamond 7s)$ (with $E=\Vert\phi\Vert$ and $F=\Vert\psi\Vert$ and using the fact that $\Vert\phi\Vert\cup\Vert\psi\Vert=\Vert\phi\vee\psi\Vert$), $\bigcup\limits_{s'\in\mathcal B(s)}{f(s',\Vert \phi\vee\psi\Vert)}\subseteq \Vert \chi\Vert$, so that $s\models B\left((\phi\vee\psi)>\chi\right)$.\\[3pt]
Conversely, fix a frame that violates property $(P\diamond 7s)$. Then there exist $s,\in S$ and $E,F\in 2^S\setminus\varnothing$ such that
\begin{equation}
\label{fornegA7s}
\bigcup\limits_{s'\in\mathcal B(s)} {f(s',E\cup F)}\,\nsubseteq \, \bigcup\limits_{s'\in\mathcal B(s)} {f(s',E)}\,\,\cup \,\,\bigcup\limits_{s'\in\mathcal B(s)} {f(s',F)}
\end{equation}
Let $p,q,r\in\texttt{At}$ be atomic formulas and construct a model where $\Vert p\Vert=E$, $\Vert q\Vert=F$ and $\Vert r\Vert=\bigcup\limits_{s'\in\mathcal B(s)} {f(s',E)}\,\,\cup \,\,\bigcup\limits_{s'\in\mathcal B(s)} {f(s',F)}$. Then, since $\Vert p\Vert\ne\varnothing$ and  $\Vert q\Vert\ne\varnothing$, $s\models \neg\square\neg p\wedge \neg\square\neg q$; furthermore, $s\models B(p>r)\wedge B(q>r)$. On the other hand, since (noting that $\Vert p\Vert\cup\Vert q\Vert=\Vert p\vee q\Vert$) $\bigcup\limits_{s'\in\mathcal B(s)} {f(s',\Vert p\vee q\Vert)}\,\nsubseteq \, \Vert r\Vert$, $s\not\models B\left((p\vee q)> r\right)$, yielding a violation of axiom $(A\diamond7s)$.
\end{proof}
\par\noindent
\textbf{Proof of Proposition \ref{PROP:sublogic}.} \emph{Every axiom of $\mathcal L_{KM}$ is a theorem of $\mathcal L_{AGM}$.}

Since some of the AGM axioms coincide with KM axioms, we only need to prove that $(A\diamond2)$, $(A\diamond6w)$ and $(A\diamond7s)$ are theorems of  $\mathcal L_{AGM}$. This is done in the following three lemmas. In the proofs 'PL' stands for 'Propositional Logic'.
\begin{lemma}
\label{LEM:forKdiamond2}
The following modal KM axiom: 
\[(A\diamond2)\qquad B\phi\,\rightarrow\,\left(B\psi\leftrightarrow B(\phi>\psi)\right)\]
is a theorem of $\mathcal L_{AGM}$.
\end{lemma}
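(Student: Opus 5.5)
We need to derive $B\phi\rightarrow(B\psi\leftrightarrow B(\phi>\psi))$ in $\mathcal L_{AGM}$, using $(A\ast3)$, $(A\ast4)$, $(D_B)$, $(NB)$ and the rules of $\mathcal L$. The plan is to first secure the two "surprise-free" premises from $B\phi$, and then obtain each direction of the biconditional from one of $(A\ast3)$, $(A\ast4)$.

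\emph{Preliminary step.} From $B\phi$ and $(D_B)$ we get $\neg B\neg\phi$. We also need $\neg\square\neg\phi$. By $(NB)$ (instantiated with $\neg\phi$) we have $\square\neg\phi\rightarrow B\neg\phi$. Contraposing gives $\neg B\neg\phi\rightarrow\neg\square\neg\phi$, so $B\phi\rightarrow\neg\square\neg\phi$ as well.

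\emph{Direction $B\psi\rightarrow B(\phi>\psi)$.} The tautology $\psi\rightarrow(\phi\rightarrow\psi)$ and $(RM_B)$ give $B\psi\rightarrow B(\phi\rightarrow\psi)$. Combining this with $\neg B\neg\phi$ (obtained from $B\phi$), axiom $(A\ast4)$ yields $B(\phi>\psi)$.

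\emph{Direction $B(\phi>\psi)\rightarrow B\psi$.} From $\neg\square\neg\phi$ (obtained from $B\phi$) and $B(\phi>\psi)$, axiom $(A\ast3)$ gives $B(\phi\rightarrow\psi)$. Then $B\phi\wedge B(\phi\rightarrow\psi)\rightarrow B(\phi\wedge(\phi\rightarrow\psi))$ by $(C_B)$. Applying $(RM_B)$ to the tautology $\phi\wedge(\phi\rightarrow\psi)\rightarrow\psi$ gives $B\psi$.

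Assembling both directions under the hypothesis $B\phi$ by PL finishes the derivation. The only mildly delicate point is extracting $\neg\square\neg\phi$ from $B\phi$, which needs both $(NB)$ and $(D_B)$. The rest is routine use of $(RM_B)$, $(C_B)$ and propositional logic.
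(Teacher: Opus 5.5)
Your proposal is correct and matches the paper's proof essentially step for step. For $B\psi\rightarrow B(\phi>\psi)$ you use $(D_B)$, $(RM_B)$ and $(A\ast4)$. For the converse you contrapose $(NB)$ and combine it with $(D_B)$ to obtain $\neg\square\neg\phi$, then apply $(A\ast3)$, $(C_B)$ and $(RM_B)$.
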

\begin{proof}
We first prove that $B\phi\,\rightarrow\,\left(B\psi\rightarrow B(\phi>\psi)\right)$ is a theorem of $\mathcal L_{AGM}$.\\[5pt]
$\arraycolsep=10pt \begin{array}{lll}
1.&B\phi\rightarrow\neg B\neg \phi&\text{axiom }D_B\\
2.&\psi\rightarrow(\phi\rightarrow\psi)&\text{tautology}\\
3.& B\psi\rightarrow B\left( \phi\rightarrow\psi\right)& \text{rule }(RM_B)\\
4.&(B\phi\wedge B\psi)\rightarrow\left( \neg B\neg\phi\wedge B(\phi\rightarrow\psi)\right)&\text{1, 3, PL}\\
5.&\left(\neg B\neg\phi\wedge B(\phi\rightarrow\psi)\right)\rightarrow B(\phi>\psi)&\text{AGM axiom }(A*4)\\
6.& (B\phi\wedge B\psi)\rightarrow B(\phi>\psi)&\text{4, 5, PL.}\\
7.&B\phi\,\rightarrow\,\left(B\psi\rightarrow B(\phi>\psi)\right)&\text{6, PL}
\end{array}$\\[5pt]
Next we prove that $B\phi\,\rightarrow\,\left(B(\phi>\psi)\rightarrow B\psi \right)$ is a theorem of $\mathcal L_{AGM}$.\\[5pt]
$\arraycolsep=10pt\begin{array}{lll}
8.&\square\neg\phi\rightarrow B\neg\phi &\text{axiom }(NB)\\
9.&\neg B\neg\phi \rightarrow\neg \square\neg\phi&\text{8, PL}\\
10.& B\phi\rightarrow\neg\square\neg\phi&\text{1, 9, PL}\\
11.&\left(B\phi\wedge B(\phi>\psi)\right)\,\rightarrow\,\left(\neg\square\neg\phi\wedge B(\phi>\psi)\right)&\text{10, PL}\\
12.&\left(\neg\square\neg\phi\wedge B(\phi>\psi)\right)\rightarrow B(\phi\rightarrow\psi)&\text{AGM axiom }(A*3)\\
13.&\left(B\phi\wedge B(\phi>\psi)\right)\,\rightarrow\,B(\phi\rightarrow\psi)&\text{11, 12, PL}
\end{array}$
\par\noindent
$\arraycolsep=10pt\begin{array}{lll}
14.&\left(B\phi\wedge B(\phi>\psi)\right)\,\rightarrow\,\left(B\phi\wedge B(\phi\rightarrow\psi)\right)&\text{13, PL}\\
15.&\left(B\phi\wedge B(\phi\rightarrow\psi)\right)\,\rightarrow\,B\left(\phi\wedge(\phi\rightarrow\psi)\right)&\text{axiom }(C_B)\\
16.&\phi\wedge(\phi\rightarrow\psi)\,\rightarrow\,\psi&\text{tautology}\\
17.&B(\phi\wedge(\phi\rightarrow\psi))\,\rightarrow\,B\psi&\text{16, rule }(RM_B)\\
18.&\left(B\phi\wedge B(\phi>\psi)\right)\,\rightarrow\,B\psi&\text{14, 15, 17, PL}\\
19.&B\phi\,\rightarrow\,\big(B(\phi>\psi)\rightarrow B\psi \big)&\text{18, PL}
\end{array}$\\[5pt]
Axiom $(A\diamond2)$ follows from lines 7 and 19.
\end{proof}
\par\noindent
\begin{lemma}
The following modal KM axiom: 
\[(A\diamond6w)\qquad \neg\square\neg(\phi\wedge\psi)\wedge
B(\phi>\psi)\wedge B(\psi>\phi)
\,\rightarrow\,\left(B(\phi>\chi)\leftrightarrow B(\psi>\chi)\right)\]
is a theorem of $\mathcal L_{AGM}$.
\end{lemma}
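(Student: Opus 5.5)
The plan is to mimic, inside $\mathcal L_{AGM}$, the standard AGM argument that $\psi\in K\ast\phi$ and $\phi\in K\ast\psi$ imply $K\ast\phi=K\ast\psi$. That argument routes the conditional beliefs through the conjunction $\phi\wedge\psi$, using $(K\ast8)$ to pass from $\phi$ to $\phi\wedge\psi$ and $(K\ast7)$ to pass from $\psi\wedge\phi$ back down to $\psi$. By the symmetry of the antecedent of $(A\diamond6w)$ in $\phi$ and $\psi$, it suffices to derive
\[\neg\square\neg(\phi\wedge\psi)\wedge B(\phi>\psi)\wedge B(\psi>\phi)\wedge B(\phi>\chi)\,\rightarrow\, B(\psi>\chi).\]
The converse implication is obtained by the same derivation with the roles of $\phi$ and $\psi$ interchanged. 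Rewriting the hypothesis $\neg\square\neg(\phi\wedge\psi)$ as $\neg\square\neg(\psi\wedge\phi)$ uses $(RM_{\neg\square\neg})$ applied to the tautology $(\phi\wedge\psi)\rightarrow(\psi\wedge\phi)$. The two implications are then combined by PL.

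\textbf{Step 1 (from $\phi$ up to $\phi\wedge\psi$).}
\begin{itemize}
\item From $\neg\square\neg(\phi\wedge\psi)$ we get $\neg\square\neg\phi$ by $(C_{\neg\square\neg})$ of Remark \ref{REM:derivedAxRul}.
\item Together with $B(\phi>\psi)$, axiom $(A^{*5b}_{\diamond3b})$ then yields $\neg B(\phi>\neg\psi)$.
\item From $B(\phi>\chi)$ and the tautology $\chi\rightarrow(\psi\rightarrow\chi)$, rule $(RM_{B>})$ gives $B(\phi>(\psi\rightarrow\chi))$.
\item Axiom $(A^{*8}_{\diamond9s})$ now delivers $B((\phi\wedge\psi)>(\psi\wedge\chi))$. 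Applying $(RM_{B>})$ to the tautology $(\psi\wedge\chi)\rightarrow\chi$ gives $B((\phi\wedge\psi)>\chi)$.
\item The rule $(R^{*6}_{\diamond4})$, applied to the tautology $(\phi\wedge\psi)\leftrightarrow(\psi\wedge\phi)$, converts this into $B((\psi\wedge\phi)>\chi)$.
\end{itemize}

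\textbf{Step 2 (from $\psi\wedge\phi$ down to $\psi$).}
\begin{itemize}
\item Apply $(A^{*7}_{\diamond5})$ with $\psi$ in the role of $\phi$ and $\phi$ in the role of $\psi$. Its antecedent $\neg\square\neg(\psi\wedge\phi)$ comes from the hypothesis via $(RM_{\neg\square\neg})$, as above. The axiom yields $B(\psi>(\phi\rightarrow\chi))$.
\item Combined with the hypothesis $B(\psi>\phi)$, the theorem $(A^{*1}_{\diamond0})$ gives $B(\psi>\chi)$.
\end{itemize}
Chaining the implications by PL gives the displayed formula.

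The only delicate point is Step 1. Axiom $(A^{*8}_{\diamond9s})$ requires the non-triviality premise $\neg B(\phi>\neg\psi)$, which is not among the hypotheses. It has to be manufactured from $B(\phi>\psi)$ via $(A^{*5b}_{\diamond3b})$, and this in turn needs $\neg\square\neg\phi$. This is exactly where the clause $\neg\square\neg(\phi\wedge\psi)$ of $(A\diamond6w)$ is used, and it is used again to license $(A^{*7}_{\diamond5})$ in Step 2. Everything else is routine applications of $(RM_{B>})$, $(R^{*6}_{\diamond4})$ and PL.
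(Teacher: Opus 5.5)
Your proof is correct and follows the same route as the paper. You go up from $\phi$ to $\phi\wedge\psi$ via $(C_{\neg\square\neg})$, $(A^{*5b}_{\diamond3b})$, $(RM_{B>})$ and $(A^{*8}_{\diamond9s})$, and back down to $\psi$ via $(A^{*7}_{\diamond5})$ and $(A^{*1}_{\diamond0})$, just as in the paper's steps 1--13 and 15--20; the paper merely packages these as two biconditionals with $B((\phi\wedge\psi)>\chi)$ before combining them. Your explicit use of $(R^{*6}_{\diamond4})$ and $(RM_{\neg\square\neg})$ to pass between $\phi\wedge\psi$ and $\psi\wedge\phi$ fills in a step that the paper's ``repeat with $\phi$ and $\psi$ interchanged'' (steps 14 and 20) leaves implicit.
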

\begin{proof}
First we show that the following formula is a theorem of $\mathcal L_{AGM}$:
\[\big(\neg\square\neg(\phi\wedge\psi)\wedge B(\phi>\psi)\big)\,\rightarrow\,\big(B(\phi>\chi)\rightarrow B\left((\phi\wedge\psi)>\chi)\right)\]
\renewcommand{\arraystretch}{1.5}
    \begin{longtable}{lMM}
  1.&\neg\square\neg(\phi\wedge\psi)\,\rightarrow \, \neg\square\neg\phi&(C_{\neg\square\neg})\\[-3pt] {}&{}&\text{(see Remark \ref{REM:derivedAxRul})}\\
2.&\big(\neg\square\neg(\phi\wedge\psi)\wedge B(\phi>\psi)\big)
\,\rightarrow\,
\left(\neg\square\neg\phi\wedge B(\phi>\psi)\right)&\text{1, PL}\\
3. &\left(\neg\square\neg\phi\wedge B(\phi>\psi)\right)\rightarrow
\neg B(\phi>\neg\psi)&\text{axiom }(A^{*5b}_{\diamond3b})\\
4.&\big(\neg\square\neg(\phi\wedge\psi)\wedge B(\phi>\psi)\big)
\,\rightarrow\,
\neg B(\phi>\neg\psi)&\text{2, 3, PL}\\
5. & \chi\rightarrow(\psi\rightarrow\chi)&\text{tautology}\\
6.&B(\phi>\chi)\rightarrow B\left(\phi>(\psi\rightarrow\chi)\right)&\text{5, rule }(RM_{B>})\\[-3pt]
{}&{}&\text{(see Remark \ref{REM:derivedAxRul})}\\
7.& \neg\square\neg(\phi\wedge\psi)\wedge B(\phi>\psi)\wedge B(\phi>\chi)\\
{}&\rightarrow\,
\neg B(\phi>\neg\psi)\wedge B\left(\phi>(\psi\rightarrow\chi)\right)&
\text{4, 6, PL}\\
8.& \neg B(\phi>\neg\psi)\wedge B\left(\phi>(\psi\rightarrow\chi)\right)\\
{}&\rightarrow\,
B\big((\phi\wedge\psi)>(\psi\wedge\chi)\big)&
\text{axiom }(A*8)\\
9.&\neg\square\neg(\phi\wedge\psi)\wedge B(\phi>\psi)\wedge B(\phi>\chi)\\
{}&\rightarrow\,
B\big((\phi\wedge\psi)>(\psi\wedge\chi)\big)&\text{7, 8, PL}\\
10.&(\psi\wedge\chi)\rightarrow\chi&\text{tautology}\\
11.& B\big((\phi\wedge\psi)>(\psi\wedge\chi)\big)
\,\rightarrow\,
B\big((\phi\wedge\psi)>\chi\big)&\text{10, rule }(RM_{B>})\\[-3pt]
{}&{}&\text{(see Remark \ref{REM:derivedAxRul})}\\
12.&\left(\neg\square\neg(\phi\wedge\psi)\wedge B(\phi>\psi)\wedge B(\phi>\chi)\right)
\,\rightarrow\,
B\big((\phi\wedge\psi)>\chi\big)&\text{9, 11, PL}\\
13.&\left(\neg\square\neg(\phi\wedge\psi)\wedge B(\phi>\psi)\right)
\,\rightarrow\,
\big(B(\phi>\chi)\rightarrow B\big((\phi\wedge\psi)>\chi\big)\big)&\text{12, PL}\medskip
\end{longtable}
\par\noindent
Next, repeating steps 1-13 with $\phi$ replaced by $\psi$ and with $\psi$ replaced by $\phi$ we get\medskip
\par\noindent
$\begin{array}{ll}
14.&\left(\neg\square\neg(\phi\wedge\psi)\wedge B(\psi>\phi)\right)
\,\rightarrow\,
\big(B(\psi>\chi)\rightarrow B\big((\phi\wedge\psi)>\chi\big)\big)\medskip
\end{array}$

\par\noindent
The next step is to show that the following is a theorem of $\mathcal L_{AGM}$:
\begin{equation*}
  \big(\neg\square\neg(\phi\wedge\psi)\wedge B(\phi>\psi)\big)\,\rightarrow\,\big(B\left((\phi\wedge\psi)>\chi\right) \rightarrow B(\phi>\chi)\big) 
  \end{equation*} 

\renewcommand{\arraystretch}{1.5}
    \begin{longtable}{lMM}
15.&\neg\square\neg(\phi\wedge\psi)\wedge B\left((\phi\wedge\psi)>\chi\right)&{}\\
{}&\rightarrow\,
B\big(\phi>(\psi\rightarrow\chi)\big)&\text{axiom }(A*7)\\
16.&\neg\square\neg(\phi\wedge\psi)\wedge B\left((\phi\wedge\psi)>\chi\right)\wedge B(\phi>\psi)&{}\\
{}&\rightarrow\,
\big( B(\phi>\psi)\wedge B\big(\phi>(\psi\rightarrow\chi)\big) \big)&\text{15, PL}\\
17.&\big(B(\phi>\psi)\wedge B(\phi>(\psi\rightarrow\chi)\big)
\,\rightarrow\, B(\phi>\chi)&\text{axiom }(A^{*1}_{\diamond 0})\\
18.&\big(\neg\square\neg(\phi\wedge\psi)\wedge B(\phi>\psi)\wedge B\left((\phi\wedge\psi)>\chi\right)\big)
\,\rightarrow\, B(\phi>\chi)&\text{16, 17, PL}\\
19.&\big(\neg\square\neg(\phi\wedge\psi)\wedge B(\phi>\psi)\big)
\,\rightarrow\, 
\big(B\left((\phi\wedge\psi)>\chi\right)\rightarrow B(\phi>\chi)\big)&\text{18, PL}
\end{longtable}
\noindent
Next, repeating steps 15-19 with $\phi$ replaced by $\psi$ and with $\psi$ replaced by $\phi$ we get
\begin{equation*}
\begin{array}{lll}
20.&\big(\neg\square\neg(\phi\wedge\psi)\wedge B(\psi>\phi)\big)
\,\rightarrow\, 
\big(B\left((\phi\wedge\psi)>\chi\right)\rightarrow B(\psi>\chi)\big)&{}
\end{array}
\end{equation*}
From 13 and 19 we get\medskip
\par\noindent
$\begin{array}{ll}
21.&\big(\neg\square\neg(\phi\wedge\psi)\wedge B(\phi>\psi)\big)
\,\rightarrow\, 
\big(B\left((\phi\wedge\psi)>\chi\right)\leftrightarrow B(\phi>\chi)\big)
\end{array}$
\medskip
\par\noindent
and from 14 and 20 we get\medskip
\par\noindent
$\begin{array}{ll}
22.&\big(\neg\square\neg(\phi\wedge\psi)\wedge B(\psi>\phi)\big)
\,\rightarrow\, 
\big(B\left((\phi\wedge\psi)>\chi\right)\leftrightarrow B(\psi>\chi)\big)
\end{array}$
\medskip
\par\noindent
Thus,\medskip
\par\noindent
\begin{equation*}
\begin{array}{lll}
23.&\big(\neg\square\neg(\phi\wedge\psi)\wedge B(\phi>\psi)\wedge B(\psi>\phi)\big)&{}\\
{}&\rightarrow\,\Big(\big(B\left((\phi\wedge\psi)>\chi\right)\leftrightarrow B(\phi>\chi)\big)&{}\\
{}&\qquad\wedge\big(B\left((\phi\wedge\psi)>\chi\right)\leftrightarrow B(\psi>\chi)\big)\Big)&\text{21, 22, PL}\\[6pt]
24.&\big(B\left((\phi\wedge\psi)>\chi\right)\leftrightarrow B(\phi>\chi)\big) \wedge\big(B\left((\phi\wedge\psi)>\chi\right)\leftrightarrow B(\psi>\chi)\big)&{}\\[6pt]
{}&\rightarrow\,\left(B(\phi>\chi)\leftrightarrow B(\psi>\chi)\right)&\text{tautology}\\[8pt]
25.&\neg\square\neg(\phi\wedge\psi)\wedge B(\phi>\psi)\wedge B(\psi>\phi)&{}\\[6pt]
{}&\rightarrow\, \big(B(\phi>\chi)\leftrightarrow B(\psi>\chi)\big)
&\text{23, 24, PL;}\\
{}&{}&\text{this is }(A\diamond 6) 
\end{array}
\end{equation*}
\end{proof}
\begin{lemma}
The following modal KM axiom: 
\[(A\diamond7s)\qquad \neg\square\neg\phi \wedge \neg\square\neg\psi\wedge B(\phi>\chi)\wedge B(\psi>\chi)\\[-5pt]
\rightarrow B\left((\phi\vee\psi)>\chi\right)\]
is a theorem of $\mathcal L_{AGM}$.
\end{lemma}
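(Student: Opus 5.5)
The plan is a Hilbert-style derivation in $\mathcal L_{AGM}$ of the kind used in Lemma \ref{LEM:forKdiamond2}. It mirrors the classical AGM argument that $K\ast\phi\cap K\ast\psi\subseteq K\ast(\phi\vee\psi)$, which uses only $(K\ast1)$, $(K\ast2)$, $(K\ast6)$ and $(K\ast7)$. The idea is to show that both $\phi\rightarrow\chi$ and $\psi\rightarrow\chi$ are believed to follow conditionally on $\phi\vee\psi$. Together with $B((\phi\vee\psi)>(\phi\vee\psi))$, closure then gives $B((\phi\vee\psi)>\chi)$. I expect to need neither $(A^{*8}_{\diamond9s})$ nor $(A^{*5b}_{\diamond3b})$.

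\emph{Step 1 (the $\phi$ half).} Since $\phi\leftrightarrow((\phi\vee\psi)\wedge\phi)$ is a tautology, rule $(R^{*6}_{\diamond4})$ gives $B(\phi>\chi)\rightarrow B(((\phi\vee\psi)\wedge\phi)>\chi)$. From the tautology $\phi\rightarrow((\phi\vee\psi)\wedge\phi)$, the derived rule $(RM_{\neg\square\neg})$ of Remark \ref{REM:derivedAxRul} gives $\neg\square\neg\phi\rightarrow\neg\square\neg((\phi\vee\psi)\wedge\phi)$. I then instantiate axiom $(A\ast7)$ with $\phi\vee\psi$ in place of $\phi$, $\phi$ in place of $\psi$, and $\chi$ unchanged. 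Combining by PL yields
\[\neg\square\neg\phi\wedge B(\phi>\chi)\,\rightarrow\,B\big((\phi\vee\psi)>(\phi\rightarrow\chi)\big).\]
\emph{Step 2 (the $\psi$ half).} Symmetrically, using the tautology $\psi\leftrightarrow((\phi\vee\psi)\wedge\psi)$, I obtain $\neg\square\neg\psi\wedge B(\psi>\chi)\rightarrow B((\phi\vee\psi)>(\psi\rightarrow\chi))$.

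\emph{Step 3 (closure under $\phi\vee\psi$).} Axiom $(A^{*2}_{\diamond1})$ gives $B((\phi\vee\psi)>(\phi\vee\psi))$. The tautology $(\phi\vee\psi)\rightarrow\big((\phi\rightarrow\chi)\rightarrow((\psi\rightarrow\chi)\rightarrow\chi)\big)$ together with rule $(RM_{B>})$ yields
\[B\big((\phi\vee\psi)>(\phi\vee\psi)\big)\rightarrow B\Big((\phi\vee\psi)>\big((\phi\rightarrow\chi)\rightarrow((\psi\rightarrow\chi)\rightarrow\chi)\big)\Big).\]
Two applications of $(A^{*1}_{\diamond0})$ then detach $\phi\rightarrow\chi$ and $\psi\rightarrow\chi$ inside $B((\phi\vee\psi)>\cdot)$. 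This leaves $B((\phi\vee\psi)>\chi)$ under the conjunction of the antecedents of Steps 1 and 2. One more PL step gives $(A\diamond7s)$.

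The only delicate point is Step 1. Axiom $(A\ast7)$ is stated with the conjunction $(\phi\vee\psi)\wedge\phi$ both in the $\neg\square\neg$ clause and in the antecedent of the conditional. I therefore have to check that the hypothesis $\neg\square\neg\phi$ of $(A\diamond7s)$ suffices via $(RM_{\neg\square\neg})$, and that $(R^{*6}_{\diamond4})$ correctly rewrites $B(\phi>\chi)$ as a belief conditional on $(\phi\vee\psi)\wedge\phi$. Everything else is routine propositional bookkeeping.
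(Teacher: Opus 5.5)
Your derivation is correct and follows the paper's proof essentially step for step: for each disjunct you use the same combination of $(R^{*6}_{\diamond4})$, $(RM_{\neg\square\neg})$ and the instance of $(A\ast7)$ with $\phi\vee\psi$ and $\phi$ (resp. $\psi$), and then close under $\phi\vee\psi$ using $(A^{*2}_{\diamond1})$ and $(A^{*1}_{\diamond0})$. The only difference is cosmetic and lies in the last step. You weaken $B((\phi\vee\psi)>(\phi\vee\psi))$ and detach $\phi\rightarrow\chi$ and $\psi\rightarrow\chi$ by two applications of $(A^{*1}_{\diamond0})$, whereas the paper first conjoins the two conditional beliefs (citing $(C_B)$, which strictly also needs $(C_>)$ and $(RM_B)$), weakens to $B((\phi\vee\psi)>((\phi\vee\psi)\rightarrow\chi))$, and detaches once.
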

\begin{proof}
First we prove that the following formula is a theorem of $\mathcal L_{AGM}$:
\[\neg\square\neg\phi\wedge B(\phi>\chi)\,\rightarrow\,B\big((\phi\vee\psi)>(\phi\rightarrow\chi)\big)\]
\renewcommand{\arraystretch}{1.5}
\begin{longtable}{lMM}
1.&\phi\leftrightarrow\big((\phi\vee\psi)\wedge\phi\big)&\text{tautology}\\
2.&B(\phi>\chi)\leftrightarrow B\big(\left((\phi\vee\psi)\wedge\phi\right)>\chi\big)&\text{1, rule }(R^{*6}_{\diamond4})\\
3.&B(\phi>\chi)\rightarrow B\big(\left((\phi\vee\psi)\wedge\phi\right)>\chi\big)&\text{2, PL}\\
4.&\phi\rightarrow\big((\phi\vee\psi)\wedge\phi\big)&\text{tautology}\\
5.&\neg\square\neg\phi \rightarrow \neg\square\neg\left((\phi\vee\psi)\wedge\phi\right)&\text{1, rule }(RM_{\neg\square\neg})\\[-5pt]
{}&{}&(\text{see Remark \ref{REM:derivedAxRul}})\\
6.&\neg\square\neg\phi\wedge B(\phi>\chi)\,{}\\
{}&\rightarrow\,\neg\square\neg\left((\phi\vee\psi)\wedge\phi\right)  \,\wedge\, B\big(\left((\phi\vee\psi)\wedge\phi\right)>\chi\big)&{3, 5, PL}\\
7.&\neg\square\neg\left((\phi\vee\psi)\wedge\phi\right)  \,\wedge\, B\big(\left((\phi\vee\psi)\wedge\phi\right)>\chi\big)&{}\\
{}&\rightarrow\,B\big((\phi\vee\psi)>(\phi\rightarrow\chi)\big)&\text{axiom }(A^{*7}_{\diamond5})\\
8.&\neg\square\neg\phi\wedge B(\phi>\chi)\,\rightarrow\,B\big((\phi\vee\psi)>(\phi\rightarrow\chi)\big)&\text{6, 7, PL}\medskip
\end{longtable}
\par\noindent
Repeating steps 1-8 with $\phi$ replaced by $\psi$ and $\psi$ by $\phi$ we get\medskip
\par\noindent
\quad$9.\quad\neg\square\neg\psi\wedge B(\psi>\chi)\,\rightarrow\,B\big((\phi\vee\psi)>(\psi\rightarrow\chi)\big)$
Thus,
\renewcommand{\arraystretch}{1.5}
\begin{longtable}{lMM}
10.&\neg\square\neg\phi\wedge B(\phi>\chi)\wedge \neg\square\neg\psi\wedge B(\psi>\chi)&{}\\
{}&\rightarrow\,B\big((\phi\vee\psi)>(\phi\rightarrow\chi)\big)\wedge B\big((\phi\vee\psi)>(\psi\rightarrow\chi)\big)&\text{8, 9, PL}\\
11.&B\big((\phi\vee\psi)>(\phi\rightarrow\chi)\big)\wedge B\big((\phi\vee\psi)>(\psi\rightarrow\chi)\big)\\
{}&\rightarrow B\big((\phi\vee\psi)>(\phi\rightarrow\chi\wedge\psi\rightarrow\chi)\big)&\text{axiom }(C_B)\\
12.&(\phi\rightarrow\chi)\wedge(\psi\rightarrow\chi) \,\rightarrow \left((\phi\vee\psi)\rightarrow\chi\right)&\text{tautology}\\
13.&B\big((\phi\vee\psi)>(\phi\rightarrow\chi\wedge\psi\rightarrow\chi)\big)&{}\\
{}&\rightarrow B\big((\phi\vee\psi)>\left((\phi\vee\psi)\rightarrow\chi\right) \big)&\text{12, rule }(RM_{B>})\\[-2pt]
{}&{}&\text{(see Remark \ref{REM:derivedAxRul})}\\
14.&\neg\square\neg\phi\wedge B(\phi>\chi)\wedge \neg\square\neg\psi\wedge B(\psi>\chi)&{}\\
{}&\rightarrow\,B\big((\phi\vee\psi)>\left((\phi\vee\psi)\rightarrow\chi\right) \big)&\text{10, 11, 13, PL}\\
15.&B\big((\phi\vee\psi)>(\phi\vee\psi)\big)&\text{axiom }(A^{*2}_{\diamond1})\\
16.&B\big((\phi\vee\psi)>\left((\phi\vee\psi)\rightarrow\chi\right)\big)&{}\\
{}& \rightarrow B\big((\phi\vee\psi)>(\phi\vee\psi)\big)\wedge B\big((\phi\vee\psi)>\left((\phi\vee\psi)\rightarrow\chi\right)\big)
&\text{15, PL}\\
17.&B\big((\phi\vee\psi)>(\phi\vee\psi)\big)\wedge B\big((\phi\vee\psi)>((\phi\vee\psi)\rightarrow\chi)\big)&{}\\
{}&\rightarrow\, B\big((\phi\vee\psi)>\chi\big)&\text{axiom }(A^{*1}_{\diamond0})\\[-3pt]
{}&{}&(\text{with }\phi \text{ and } \psi\\[-4pt]
{}&{}&\text{replaced by }\phi\vee\psi)\\
18.&B\big((\phi\vee\psi)>\left((\phi\vee\psi)\rightarrow\chi\right)\big)\,\rightarrow B\big((\phi\vee\psi)>\chi\big)
&\text{16, 17, PL}\\
19.&\neg\square\neg\phi\wedge B(\phi>\chi)\wedge \neg\square\neg\phi\wedge B(\psi>\chi)&{}\\
{}&\rightarrow\, B\big((\phi\vee\psi)>\chi\big)&\text{14, 18, PL}\\[-4pt]
{}&{}&\text{this is }(A\diamond7s)
\end{longtable}
\end{proof}
\noindent
\textbf{Lemma \ref{LEM:K9s}}. \emph{$(K\diamond9s)$ follows from $(K\diamond0)$, $(K\diamond8)$ and $(K\diamond9)$.}
\begin{proof}
Fix arbitrary $\phi,\psi\in\Phi_0$. By $(K\diamond8)$ (recall that $\llbracket K\rrbracket=\{w\in W: K\subseteq w\}$)
\begin{equation}
\label{forK9sa}
K\diamond\phi\,=\,\bigcap_{w\in\llbracket K\rrbracket}{(w\diamond\phi)}
\end{equation} 
Let $A=\left\{w\in\llbracket K\rrbracket:\neg\psi\notin w\diamond\phi\right\}$ and $B=\left\{w\in\llbracket K\rrbracket:\neg\psi\in w\diamond\phi\right\}$ (clearly $\llbracket K\rrbracket=A\cup B$). Assume that $\neg\phi\notin K\diamond\phi$. Then $A\ne\varnothing$. First note that
\begin{equation}
\label{forK9sb}
\forall w\in B, \,(w\diamond\phi)+\psi=\Phi_0;\text{ thus, }\bigcap_{w\in B}\big((w\diamond\phi)+\psi\big)=\Phi_0
\end{equation}
It follows from \eqref{forK9sb} that
\begin{equation}
\label{forK9sc}
\bigcap_{w\in\llbracket K\rrbracket}\left((w\diamond\phi)+\psi\right)=\bigcap_{w\in A}\left((w\diamond\phi)+\psi\right)\,\cap\,\Phi_0=\bigcap_{w\in A}\left((w\diamond\phi)+\psi\right)
\end{equation}
By $(K\diamond9)$ (since each $w$ is complete), 
\begin{equation}
\label{forK9sd}
\forall w\in A, \quad(w\diamond\phi)+\psi\,\subseteq\,w\diamond(\phi\wedge\psi)
\end{equation}
Next, note that, by $(K\diamond0)$ (which ensures that $K\diamond\phi=Cn\left(K\diamond\phi\right)$ and $w\diamond\phi=Cn\left(w\diamond\phi\right)$) and \eqref{forK9sa},\footnote{
Since $K\diamond\phi$ is deductively closed, for every $\chi\in\Phi_0$, $\chi\in K\diamond\phi+\psi$ if and only if $(\psi\rightarrow\chi)\in K\diamond\phi$. Since, $\forall w\in W$, $w\diamond\phi$ is deductively closed, $\bigcap_{w\in\llbracket K\rrbracket}{(w\diamond\phi)}$ is deductively closed and thus $\chi\in \left(\bigcap_{w\in\llbracket K\rrbracket}{(w\diamond\phi)}\right)+\psi$ if and only if $(\psi\rightarrow\chi)\in \bigcap_{w\in\llbracket K\rrbracket}{(w\diamond\phi})$ if and only if $\chi\in \bigcap_{w\in\llbracket K\rrbracket}{\left(w\diamond\phi+\psi\right)}$.
}
\begin{equation}
\label{forK9se}
(K\diamond\phi)+\psi\,=\,\left(\bigcap_{w\in\llbracket K\rrbracket}{(w\diamond\phi)}\right)+\psi\,=\,\bigcap_{w\in\llbracket K\rrbracket}{(w\diamond\phi+\psi)}
\end{equation}
It follows from \eqref{forK9sc}, \eqref{forK9sd} and \eqref{forK9se} that
 \begin{equation}
\label{forK9sf}
(K\diamond\phi)+\psi\,\subseteq\,\bigcap_{w\in\llbracket K\rrbracket}{(w\diamond(\phi\wedge\psi))}
\end{equation}
and, by $(K\diamond 8)$, $\bigcap_{w\in\llbracket K\rrbracket}{(w\diamond(\phi\wedge\psi))}=K\diamond(\phi\wedge\psi)$. 
\end{proof}
\noindent
\textbf{Lemma \ref{LEM:K*3inKM}}. $(A\ast3)\quad\big(\neg\square\neg\phi\wedge B(\phi>\psi\big)\rightarrow B(\phi\rightarrow\psi)$ \emph{is provable in logic} $\mathcal L_{KM}$.
\begin{proof}
Recall that $(A\diamond2)$ is the following axiom, for $\chi,\xi\in\Phi_0$,  $B\chi\rightarrow\big(B\xi\leftrightarrow B(\chi>\xi)\big)$; in line 1 we take the instance of $(A\diamond2)$ with $\chi=\phi\vee\neg\phi$ and $\xi=\phi\rightarrow\psi$. Recall also that  $(A\diamond5)$ is the following axiom, for $\chi,\xi,\theta\in\Phi_0$, $\big(\neg\square\neg(\chi\wedge\xi)\wedge B\left((\chi\wedge\xi)>\theta\right)\big)\,\rightarrow\, B(\chi>(\xi\rightarrow\theta))$; in line 5 we take the instance of $(A\diamond5)$ with $\chi=\phi\vee\neg\phi$ and $\xi=\phi$ and $\theta=\psi$.
\renewcommand{\arraystretch}{1.5}
\begin{longtable}{lMM}
1.&B(\phi\vee\neg\phi)\,\rightarrow\,\big(B(\phi\rightarrow\psi)\leftrightarrow B\left((\phi\vee\neg\phi)>(\phi\rightarrow\psi)\right)\big)&\text{axiom }(A\diamond 2)\\
2.&\phi\vee\neg\phi&\text{tautology}\\
3.&B(\phi\vee\neg\phi)&\text{2, rule }(N_B)\\
4.&B(\phi\rightarrow\psi)\leftrightarrow B\left((\phi\vee\neg\phi)>(\phi\rightarrow\psi)\right)&\text{1, 3, rule }(MP)\\
5.& \neg\square\neg((\phi\vee\neg\phi)\wedge\phi)\wedge  B\left((\phi\vee\neg\phi)\wedge\phi)>\psi\right)&{}\\
{}&\rightarrow B((\phi\wedge\neg\phi)>(\phi\rightarrow\psi))&\text{axiom }(A\diamond5)\\
6.& \phi\rightarrow (\phi\vee\neg\phi)\wedge\phi&\text{tautology}\\
7.&\neg\square\neg\phi\,\rightarrow\,\neg\square\neg((\phi\vee\neg\phi)\wedge\phi)&\text{6, rule }(RM_{\neg\square\neg})\\
{}&{}&\text{see Remark \ref{REM:derivedAxRul}}\\
8.&B(\phi>\psi)\,\rightarrow\,B\left((\phi\vee\neg\phi)\wedge\phi)>\psi\right)&\text{6, rule }(RM_{B>})\\
{}&{}&\text{see Remark \ref{REM:derivedAxRul}}\\
9.& \neg\square\neg\phi\wedge B(\phi>\psi)&{}\\
{}&\rightarrow\,\neg\square\neg((\phi\vee\neg\phi)\wedge\phi)\wedge  B\left((\phi\vee\neg\phi)\wedge\phi)>\psi\right)&\text{7, 8 PL}\\
10.& \neg\square\neg\phi\wedge B(\phi>\psi)\,\rightarrow\,B((\phi\wedge\neg\phi)>(\phi\rightarrow\psi))&\text{9, 5, PL}\\
11.& \neg\square\neg\phi\wedge B(\phi>\psi)\,\rightarrow\,B(\phi\rightarrow\psi)&\text{10, 4, PL}
\end{longtable}
\end{proof}
We conclude by proving the semantic correspondence for AGM axiom $(K\ast4)$ and its modal counterpart $(A\ast4)$.
\begin{lemma}
\label{LEM:K*4}
The following property of frames:
\[(P\ast4)\begin{array}{l} \forall s\in S, \forall E,F\in 2^S,\\[4pt]
 \text{ if } \mathcal B(s)\cap E\ne\varnothing\text{ and }\mathcal B(s)\subseteq(S\setminus E)\cup F\text{ then } \bigcup\limits_{s'\in\mathcal B(s)}{f(s',E)}\,\subseteq\, F\end{array}\]
 characterizes the AGM axiom
 \[(K\ast4)\quad\text{If } \neg\phi\notin K\text{ then }K+\phi\subseteq K\ast\phi\]
 and its modal counterpart
 \[(A\ast4)\quad \neg B\neg\phi \wedge B(\phi\rightarrow\psi)\,\rightarrow\, B(\phi>\psi)\]
\end{lemma}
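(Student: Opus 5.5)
The proof splits into four parts, mirroring the earlier correspondence proofs in this Appendix: validity of $(K\ast4)$ on $(P\ast4)$-frames, failure of $(K\ast4)$ on frames violating $(P\ast4)$, and the same two directions for $(A\ast4)$. Throughout we use the semantic translations: $\neg\phi\notin K_s$ iff $\mathcal B(s)\cap\Vert\phi\Vert\ne\varnothing$; $(\phi\rightarrow\psi)\in K_s$ iff $\mathcal B(s)\subseteq(S\setminus\Vert\phi\Vert)\cup\Vert\psi\Vert$; and $\psi\in K_s\ast\phi$ iff $\bigcup_{s'\in\mathcal B(s)}f(s',\Vert\phi\Vert)\subseteq\Vert\psi\Vert$. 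The modal versions are the same with $s\models\neg B\neg\phi$, $s\models B(\phi\rightarrow\psi)$ and $s\models B(\phi>\psi)$.

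\textbf{Validity.} Fix a model on a frame satisfying $(P\ast4)$ and a state $s$, and assume $\neg\phi\notin K_s$, so $\Vert\phi\Vert\ne\varnothing$ and $\phi$ is in the domain. Take $\psi\in K_s+\phi$. Since $K_s$ is deductively closed, $(\phi\rightarrow\psi)\in K_s$, i.e. $\mathcal B(s)\subseteq(S\setminus\Vert\phi\Vert)\cup\Vert\psi\Vert$. Applying $(P\ast4)$ with $E=\Vert\phi\Vert$ and $F=\Vert\psi\Vert$ gives $\bigcup_{s'}f(s',\Vert\phi\Vert)\subseteq\Vert\psi\Vert$, which is $\psi\in K_s\ast\phi$. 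The modal argument is identical: $s\models\neg B\neg\phi$ yields $\mathcal B(s)\cap\Vert\phi\Vert\ne\varnothing$, hence $\Vert\phi\Vert\ne\varnothing$, so clause (b) of the truth condition for $>$ applies at each $s'\in\mathcal B(s)$.

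\textbf{Failure.} Suppose a frame violates $(P\ast4)$. Then there are $s,E,F$ with $\mathcal B(s)\cap E\ne\varnothing$ and $\mathcal B(s)\subseteq(S\setminus E)\cup F$, but $\bigcup_{s'\in\mathcal B(s)}f(s',E)\not\subseteq F$; note $E\ne\varnothing$. Choose atoms $p,q$ and a valuation with $\Vert p\Vert=E$ and $\Vert q\Vert=F$. Then $\neg p\notin K_s$ and $(p\rightarrow q)\in K_s$, so $q\in K_s+p$, yet $q\notin K_s\ast p$, violating $(K\ast4)$. In the same model $s\models\neg B\neg p\wedge B(p\rightarrow q)$ but $s\not\models B(p>q)$, violating $(A\ast4)$.

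All steps are routine. The only point needing care is confirming that $E=\Vert\phi\Vert$ is nonempty, so that $f$ is defined and clause (b) of the truth condition for $>$ applies; this follows directly from $\mathcal B(s)\cap E\ne\varnothing$.
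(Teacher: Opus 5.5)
Your proposal is correct and follows essentially the same route as the paper. For validity, you use the deductive closure of $K_s$ to pass from $\psi\in K_s+\phi$ to $(\phi\rightarrow\psi)\in K_s$ and then apply $(P\ast4)$ with $E=\Vert\phi\Vert$, $F=\Vert\psi\Vert$. For failure, you use the countermodel $\Vert p\Vert=E$, $\Vert q\Vert=F$. The only differences are cosmetic: you use one countermodel for both $(K\ast4)$ and $(A\ast4)$, and you state explicitly that $E\ne\varnothing$ guarantees $f(s',E)$ is defined, which the paper leaves implicit.
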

\begin{proof}
(A) First we prove that $(P\ast4)$ characterizes $(K\ast4)$. Fix an arbitrary model based on a frame that satisfies Property $(P\ast4)$, an arbitrary state $s$ and an arbitrary formula $\phi$ and suppose that $\neg\phi\notin K_s$, that is, $\mathcal B(s)\cap \Vert\phi\Vert\ne\varnothing$. Let $\psi\in K_s+\phi$; then, since $K_s$ is deductively closed, $\phi\rightarrow\psi\in K_s$, that is, $\mathcal B(s)\subseteq\Vert \phi\rightarrow\psi\Vert=(S\setminus\Vert\phi\Vert)\cup\Vert\psi\Vert$. Then by Property $(P\ast4)$ (with $E=\Vert\phi\Vert$ and $F=\Vert\psi\Vert$) $\bigcup\limits_{s'\in\mathcal B(s)}{f(s',\Vert\phi\Vert)}\,\subseteq\, \Vert\psi\Vert$, that is, $\psi\in K\diamond\phi$.\\[3pt]
Conversely, fix a frame that violates Property $(P\ast4)$. Then there exist $s\in S$ and $E,F\in 2^S$ such that $\mathcal B(s)\cap E\ne\varnothing$ and $\mathcal B(s)\subseteq(S\setminus E)\cup F$ but $\bigcup\limits_{s'\in\mathcal B(s)}{f(s',E)}\,\not\subseteq\, F$. Let $p,q\in \texttt{At}$ be atomic sentences and construct a model based on this frame where $\Vert p\Vert=E$ and $\Vert q\Vert=F$. Then, since  $\mathcal B(s)\cap \Vert p\Vert\ne\varnothing$, $\neg p\notin K_s$ and, since $\mathcal B(s)\subseteq(S\setminus \Vert p\Vert)\cup \Vert q\Vert = \Vert p\rightarrow q\Vert$, $(p\rightarrow q)\in K_s$, that is $q\in K_s+{p}$; however, since  $\bigcup\limits_{s'\in\mathcal B(s)}{f(s',\Vert p\Vert)}\,\not\subseteq\, \Vert q\Vert$, $q\notin K\diamond p$.\\[10pt]
(B) Next we prove that $(P\ast4)$ characterizes $(A\ast4)$.  Fix an arbitrary model based on a frame that satisfies Property $(P\ast4)$, an arbitrary state $s$ and arbitrary formulas $\phi$ and $\psi$ and suppose that $s\models \neg B\neg\phi \wedge B(\phi\rightarrow\psi)$. We need to show that $s\models B(\phi>\psi)$. Since $s\models \neg B\neg\phi$, $\mathcal B(s)\cap\Vert\phi\Vert\ne\varnothing$. Since $s\models B(\phi\rightarrow\psi)$, $\mathcal B(s)\subseteq\Vert\phi\rightarrow\psi\Vert=(S\setminus\Vert\phi\Vert)\cup\Vert\psi\Vert$. Thus, by Property $(P\ast4)$ (with $E=\Vert\phi\Vert)$ and $F=\Vert\psi\Vert)$, $\bigcup\limits_{s'\in\mathcal B(s)}{f(s',\Vert\phi\Vert)}\,\subseteq\, \Vert\psi\Vert$, that is, $s\models B(\phi>\psi)$.\\[3pt]
Conversely,  fix a frame that violates Property $(P\ast4)$. Then there exist $s\in S$ and $E,F\in 2^S$ such that $\mathcal B(s)\cap E\ne\varnothing$ and $\mathcal B(s)\subseteq(S\setminus E)\cup F$ but $\bigcup\limits_{s'\in\mathcal B(s)}{f(s',E)}\,\not\subseteq\, F$. Let $p,q\in \texttt{At}$ be atomic sentences and construct a model based on this frame where $\Vert p\Vert=E$ and $\Vert q\Vert=F$. Then, since  $\mathcal B(s)\cap \Vert p\Vert\ne\varnothing$, $s\models \neg B\neg p$ and, since $\mathcal B(s)\subseteq(S\setminus \Vert p\Vert)\cup \Vert q\Vert = \Vert p\rightarrow q\Vert$, $s\models B(p\rightarrow q)$; however, since  $\bigcup\limits_{s'\in\mathcal B(s)}{f(s',\Vert p\Vert)}\,\not\subseteq\, \Vert q\Vert$, $s\not\models B(p>q).$
\end{proof}

\end{document}